\newcommand{\appendixhead}%
{\hspace{2.3in}\textbf{\huge Appendices}
\vspace{0.2in}}
\theoremstyle{plain}
\newmdtheoremenv{theo}{Theorem}[section]
\newmdtheoremenv{coro}[theo]{Corollary}
\newmdtheoremenv{lem}[theo]{Lemma}
\theoremstyle{definition}
\newtheorem{assumption}[theo]{Assumption}
\theoremstyle{remark}
\def\E{\mathbb{E}}
\newtheorem*{Lemma*}{Lemma}
\newtheorem*{Theorem*}{Theorem}
\newtheorem*{Corollary*}{Corollary}
\newcommand{\eqsp}{\;}
\newcommand{\beq}{\begin{equation}}
\newcommand{\eeq}{\end{equation}}
\newcommand{\eqdef}{\mathrel{\mathop:}=}
\def\EE{\mathbb{E}}
\newcommand{\norm}[1]{\left\Vert #1 \right\Vert}
\newcommand{\pscal}[2]{\left\langle#1\,|\,#2 \right\rangle}
\def\rset{\ensuremath{\mathbb{R}}}
\newcommand{\inter}{\llbracket n \rrbracket}
\def\tot{\mathsf{h}}
\newcommand{\sign}{\text{sign}}
\begin{document}

\title{\bf Layer-wise and Dimension-wise Locally Adaptive Federated Learning}

\author{\vspace{0.3in}\\\textbf{Belhal Karimi,\ Ping Li,\ Xiaoyun Li} \\\\
Cognitive Computing Lab\\
Baidu Research\\
10900 NE 8th St. Bellevue, WA 98004\\\\
  \texttt{\{belhal.karimi,\ pingli98,\ lixiaoyun996\}@gmail.com}
}
\date{\vspace{0.3in}}
\maketitle

\begin{abstract}
In the emerging paradigm of Federated Learning (FL), large amount of clients such as mobile devices are used to train possibly high-dimensional models on their respective data. Combining (\textit{dimension-wise}) adaptive gradient methods (e.g. Adam, AMSGrad)
with FL has been an active direction, which is shown to outperform traditional SGD based FL in many cases. In this paper, we focus on the problem of training federated deep neural networks, and propose a novel FL framework which further introduces \emph{layer-wise} adaptivity to the local model updates. Our framework can be applied to locally adaptive FL methods including two recent algorithms, Mime~\citep{karimireddy2020mime} and Fed-AMS~\citep{chen2020toward}.
Theoretically, we provide a convergence analysis of our layer-wise FL methods, coined Fed-LAMB and Mime-LAMB, which matches the convergence rate of state-of-the-art results in FL and exhibits linear speedup in terms of the number of workers. Experimental results on various datasets and models, under both IID and non-IID local data settings, show that both Fed-LAMB and Mime-LAMB achieve faster convergence speed and better generalization performance, compared to the various recent adaptive FL methods.
\end{abstract}

\newpage

\section{Introduction}\label{sec:introduction}

A growing and important task while learning models on observed data, is the ability to train over a large number of clients which could either be personal devices or distinct entities.
In the paradigm of Federated Learning (FL)~\citep{konevcny2016federated,mcmahan2017communication}, a central server orchestrates the optimization over those clients under the constraint that the data can neither be gathered nor shared among the clients.
This is computationally more efficient, since more distributed computing resources are used; also, this is a very practical scenario which allows individual data holders (e.g., mobile devices) to train a  model jointly without leaking private data.
In this paper, we consider the following optimization problem:
\begin{equation}\label{eq:opt}
\min_{\theta} f(\theta) \eqdef \frac{1}{n} \sum_{i=1}^n f_i(\theta)= \frac{1}{n} \sum_{i=1}^n \mathbb E_{\xi\sim \mathcal X_i}[F_i(\theta;\xi)],
\end{equation}
where the nonconvex function (e.g., deep networks) $f_i$ represents the average loss over the local data samples for worker $i \in \inter$, and $\theta \in \mathbb R^d$ the global model parameter.
$\mathcal X_i$ is the data distribution on each client $i$.
While \eqref{eq:opt} reminds that of standard distributed optimization, the principle and setting of FL are different from the classical distributed paradigm: (i) Local updates: FL allows clients to perform multiple updates on the local models before the global aggregation; (ii) Data heterogeneity: in FL, the local data distributions $\mathcal X_i$ are usually different across workers, hindering the convergence of the global model.
FL aims at finding a solution of \eqref{eq:opt} in fewest number of communication rounds.

\vspace{0.1in}

One of the most popular framework for FL is called Fed-SGD~\citep{mcmahan2017communication}: we adopt multiple local Stochastic Gradient Descent (SGD) steps in each device, send those local models to the server that computes the average over the received local model parameters, and broadcasts it back to the devices. Moreover, momentum can be added to local SGD training for faster convergence and better learning performance~\citep{Proc:YuJY_ICML19}. On the other hand, adaptive gradient methods, e.g., Adam~\citep{KB15}, AMSGrad~\citep{reddi2019convergence}, have shown great success in many deep learning tasks. For instance, the update rule of Adam reads as
\begin{equation}\label{rule:adam}
    \begin{aligned}
       \theta_t=\theta_{t-1}-\frac{\alpha m_t}{\sqrt{v_t}},\ \quad &m_t=\beta_1 m_{t-1}+(1-\beta_1) g_t,\\
       & v_t=\beta_2 v_{t-1}+(1-\beta_2) g_t^2,
    \end{aligned}
\end{equation}
where $\alpha$ is the learning rate and $g_t$ is the gradient at time $t$.
We note that the effective learning rate of Adam is $\alpha/\sqrt{v}$, which is different across dimensions, i.e., \textit{dimension-wise} adaptive. Recently, we have seen growing research efforts in the design of FL frameworks that adopt adaptive gradient methods as the protocols for local model training instead of SGD.
Examples include federated AMSGrad (Fed-AMS)~\citep{chen2020toward} and Mime~\citep{karimireddy2020mime} with Adam updates.
Specifically, in both methods, in each round the global server not only aggregates the local models, but also broadcasts to the workers a ``global'' second moment estimation to reconcile the dimension-wise adaptive learning rates across the clients. Therefore, this step can be regarded as a natural mitigation to data heterogeneity, which is a common and important practical scenario that affects the performance of FL algorithms~\citep{li2019federated,liang2019variance,karimireddy2019scaffold}.

\vspace{0.1in}

In this paper, we focus on improving adaptive FL algorithms. For (single-machine) training of deep neural networks using Adam,~\citet{you2019large} proposed a \textit{layer-wise} adjusted learning rate scheme called LAMB, where in each update, the ratio $m_t/\sqrt{v_t}$ is further normalized by the weight of the deep network, respectively for each layer.
LAMB allows large-batch training which could in particular speed up training large datasets and models like ImageNet~\citep{deng2009imagenet} and BERT~\citep{bert19}. Inspired by the acceleration effect of LAMB, we propose an improved framework for locally adaptive FL algorithms, integrating both \emph{dimension-wise} and \emph{layer-wise} adaptive learning rates in each device's local update. More specifically, \textbf{our contributions} are summarized as follows:

\begin{itemize}
\item We develop Fed-LAMB and Mime-LAMB, two instances of our layer-wise adaptive optimization framework for federated learning, following a principled layer-wise adaptive strategy to accelerate the training of deep neural networks.

\item We show that our algorithm converges at the rate of $\mathcal{O}\left(\frac{1}{\sqrt{n\tot R}} \right)$ to a stationary point, where $\tot$ is the number of layers of the network, $n$ is the number of clients and $R$ is the number of communication rounds. This matches the convergence rate of LAMB, AMSGrad, as well as the state-of-the-art results in federated learning. The theoretical communication efficiency matches that of Fed-AMS~\citep{chen2020toward}.

\item We empirically compare several recent adaptive FL methods under both homogeneous and heterogeneous data setting on various benchmark datasets.
Our results confirm the accelerated empirical convergence of Fed-LAMB and Mime-LAMB over the baseline methods, including Fed-AMS and Mime.
In addition, Fed-LAMB and Mine-LAMB can also reach similar, or better, test accuracy than their corresponding baselines.
\end{itemize}

\section{Background and Related Work}\label{sec:related}

Firstly, we summarize some relevant work on adaptive optimization, layer-wise adaptivity and federated learning, which compose the key ingredients of our algorithm.

\vspace{0.1in}\noindent\textbf{Adaptive gradient methods.}
Adaptive methods have proven to be the spearhead for many nonconvex optimization tasks.
Gradient based optimization algorithms alleviate the possibly high nonconvexity of the objective function by adaptively updating each coordinate of their learning rate using past gradients.
Common used examples include RMSprop~\citep{TH12}, Adadelta~\citep{Z12}, Adam~\citep{KB15}, Nadam~\citep{dozat2016incorporating} and AMSGrad~\citep{reddi2019convergence}.
Their popularity owes to their great performance in training deep neural networks.
They generally combine the idea of adaptivity from AdaGrad~\citep{DHS11,mcmahan2010adaptive}, as explained above, and the idea of momentum from Nesterov's Method~\citep{N04} or Heavy ball method~\citep{P64} using past gradients.
AdaGrad displays superiority when the gradient is sparse compared to other classical methods~\citep{DHS11}. Yet, when applying AdaGrad to train deep neural networks, it is observed that the learning rate might decay too fast. Consequently,~\citep{KB15} developed Adam whose updating rule is presented in \eqref{rule:adam}.
A variant, called AMSGrad described in~\citet{reddi2019convergence}, forces $v$ to be monotone to fix the convergence issue. The convergence and generalization of adaptive methods are studied in, e.g.,~\citep{zhou2018convergence,Proc:Chen_ICLR19,zhou2020towards}. \cite{Proc:Li_ICLR22} extended adaptive gradient method to distributed setting with communication-efficient gradient compression.

\vspace{0.1in}\noindent\textbf{Layer-wise Adaptivity.} When training deep networks, in many cases the scale of gradients differs a lot across the network layers. When we use the same learning rate for the whole network, the update might be too preservative for some specific layers (with large weights) which may slow down the convergence. Based on this observation,~\citet{Proc:LARS18} proposed LARS, an extension of SGD with layer-wise adjusted scaling, whose performance, however, is not consistent across tasks. Later,~\citet{you2019large} proposed LAMB, an analogue layer-wise adaptive variant of Adam. The update rule of LAMB for the $\ell$-th layer of the network can be expressed as
\begin{align*}
    \theta_t^\ell=\theta_{t-1}^\ell-\frac{\alpha \| \theta_{t-1}^\ell\|}{\|\psi_t^\ell\|}\psi_t^\ell,\ \text{with}\ \psi_t^\ell=m_t^\ell/\sqrt{v_t^\ell},
\end{align*}
where $m_t$ and $v_t$ are defined in \eqref{rule:adam}. Intuitively, for the $\ell$-th layer, when the gradient magnitude is too small compared to the scale of the model parameter, we increase the effective learning rate to make the model move sufficiently far. Theoretically,~\citep{you2019large} showed that LAMB achieves the same convergence rate as Adam; empirically, LAMB can significantly accelerate the convergence of Adam, allowing the use of large mini-batch size with fewer training iterations for large datasets.

\vspace{0.1in}\noindent\textbf{Federated learning.}
An extension of the well known parameter server framework, where a model is being trained on several servers in a distributed manner, is called federated learning (FL)~\citep{konevcny2016federated,mcmahan2017communication} which has seen many applications in various fields~\citep{Article:YangLCT19,Proc:Leroy_ICASSP19,bonawitz2019towards,Article:NiknamDR20,Proc:XuGSWBW21}. For Fed-SGD (where clients perform SGD-based updates), recent variants and theoretical analysis on the convergence can be found in~\citet{Proc:YuJY_ICML19,karimireddy2019scaffold,Proc:Khaled_AISTATS20,Proc:Li_ICLR20,Proc:Woodworth_ICML20,Proc:WangTBR_ICLR20}.

Recently, several works have considered integrating adaptive gradient methods with FL. \citet{reddi2020adaptive} proposed Adp-Fed where the central server applies Adam-type updates. However, the local clients still perform SGD updates. \citet{chen2020toward,karimireddy2020mime} proposed Fed-AMS and Mime respectively, to adopt Adam/AMSGrad at the client level. Both works mitigate the influence of data heterogeneity by ``sharing'' the second moment $v$ (which controls the effective learning rate): in Fed-AMS, a global $v$ is computed and synchronized in each round by averaging the $v_i$'s, $i=1,...,n$ from the local clients; in Mime, a global $v$ is directly calculated using full-batches (averaged over all clients) and maintained at the central server. Hence, Mime requires at least twice computation as Fed-AMS. On many tasks, these methods outperform Fed-SGD and other popular methods like SCAFFOLD~\citep{karimireddy2019scaffold} and FedProx~\citep{Article:Sahu_arxiv18}.

\section{Layer-wise Adaptive Federated Learning}\label{sec:main}

\begin{figure}
\centering
    \includegraphics[width=4.5in]{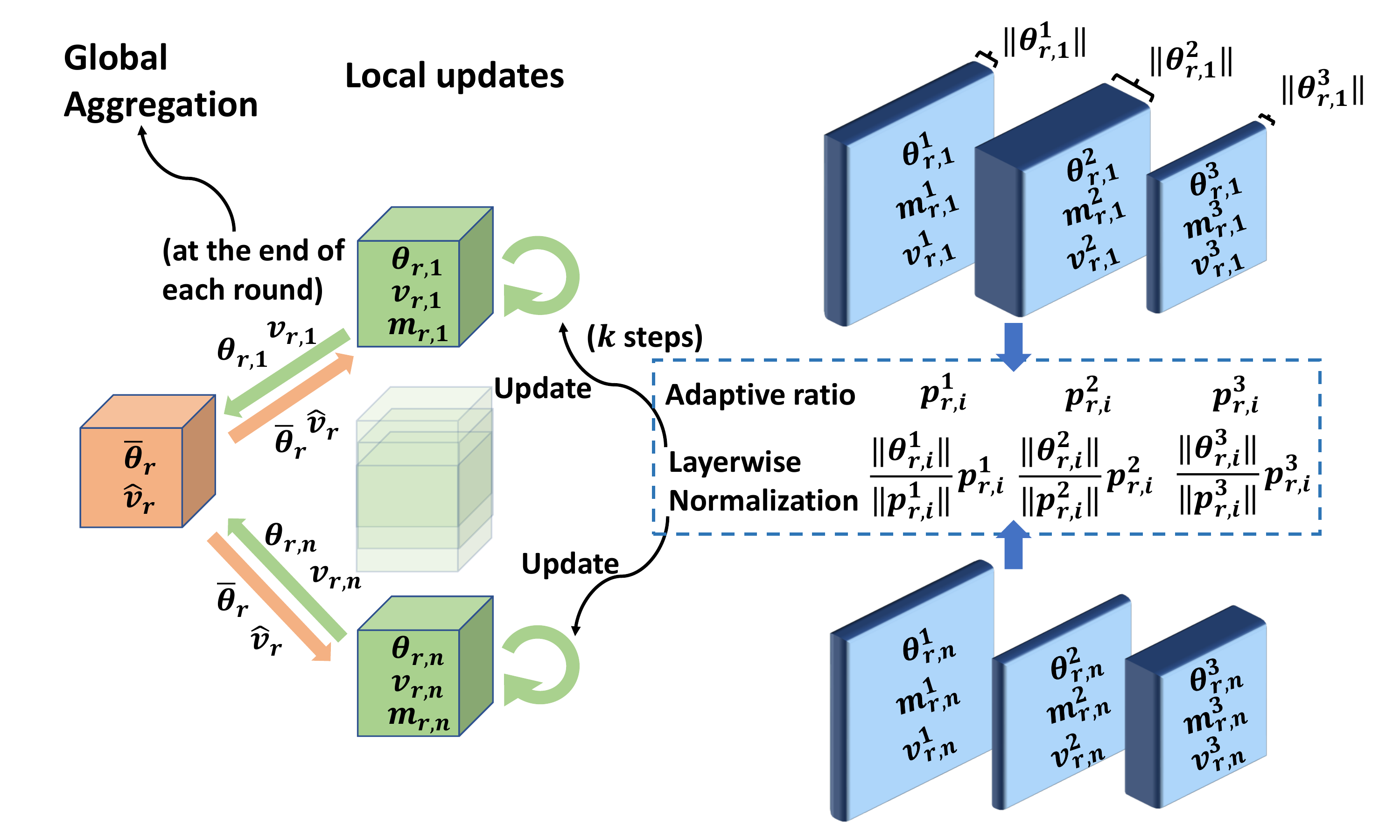}

	\caption{Illustration of Fed-LAMB framework (Algorithm~\ref{alg:ldams}), with a three-layer network and $\phi(x)=x$ as an example. The depth of each network layer represents the norm of its weights. For device $i$ and each local iteration in round $r$, the adaptive ratio of $j$-th layer $\psi_{r,i}^j$ is normalized according to $\Vert \theta_{r,i}^j\Vert$, and then used for updating the local model. At the end of each round $r$, client $i$ sends $\theta_{r,i} =  [\theta_{r,i}^{\ell}]_{\ell =1}^{\tot}$ and $v_{r,i}$ to the central server, which transmits back aggregated $\theta$ and $\hat v$ to devices to complete a round of training.}
	\label{fig:illustrate}
\end{figure}

In this section, we introduce our proposed federated learning framework, admitting both \textit{dimension-wise} adaptivity (of adaptive learning rate) and \textit{layer-wise} adaptivity (of layer-wise scaling). For conciseness, we mainly consider AMSGrad~\citep{reddi2019convergence} as the prototype method as it enjoys better theoretical convergence properties. We assume the loss function $f(\cdot)$ is induced by a multi-layer structured neural network, which includes a broad class of network architectures such as MLP, CNN, ResNet and Transformers.

\vspace{0.2in}
\noindent\textbf{Notations.} We denote by $\theta$ the vector of parameters taking values in $\rset^p$.
Suppose the neural network has $\tot$ layers, each with size $p_\ell$ (thus, $p= \sum_{\ell=1}^\tot p_\ell$). For each layer $\ell \in \llbracket \tot \rrbracket$, denote $\theta^\ell$ as the sub-vector corresponding to the $\ell$-th layer. Let $R$ be the number of communication rounds and $T$ be the number of local iterations per round. $\theta_{r,i}^{\ell,t}$ is the model parameter of layer $\ell$ at round $r$, local iteration $t$ and for worker $i$.

\begin{algorithm}[t]
\caption{ \colorbox{blue!20!white}{Fed-LAMB} and \colorbox{red!20!white}{Mime-LAMB} } \label{alg:ldams}
\begin{algorithmic}[1]

\State \textbf{Input}: parameter $0< \beta_1, \beta_2 <1$; learning rate $\alpha$; weight decaying rate $\lambda \in [0,1]$.
\State \textbf{Initialize}: $\theta_{0,i} \in \Theta \subseteq \mathbb R^d $; $m^0_{0,i}=\hat v^0_{0,i}=v^0_{0,i} = 0$, $\forall i\in \llbracket n\rrbracket$; $\bar{\theta}_0 =  \frac{1}{n} \sum_{i=1}^n \theta_{0,i}$; $\hat v_0=\epsilon$

\For{$r=1$ to $R$}
\State Sample a set of clients $D^r$
\For{parallel for device $i \in D^{r}$}
\State Set $\theta_{r,i}^{0} = \bar{\theta}_{r-1}$,\quad $m^{0}_{r,i} = m^T_{r-1,i}$\ ,\quad $v^{0}_{r,i} = \hat{v}_{r-1}$
\For{$t=1$ to $T$}
\State Sample a mini-batch from the local data
\State Compute stochastic gradient $g^t_{r,i}$ at $\theta_{r,i}^{t-1}$
\State $m^t_{r,i} = \beta_1 m^{t-1}_{r,i} + (1 - \beta_1) g^t_{r,i}$

\State \colorbox{blue!20!white}{$v^{t}_{r,i} = \beta_2 v^{t}_{r-1,i} + (1 - \beta_2) (g^t_{r,i})^2$ }

\State Compute the ratio  $\psi_{r,i}^t=m^{t}_{r,i}/(\sqrt{\hat v_{r-1}})$. \label{line:scale}

\State \label{line:layer} Update local model for each layer $\ell \in \llbracket \tot \rrbracket$:
\begin{equation}\label{eq:updatelayer}
    \theta_{r,i}^{\ell,t}=\theta_{r,i}^{\ell,t-1}-\frac{\alpha_{r}\phi(\|\theta_{r,i}^{\ell,t-1}\|)(\psi_{r,i}^{\ell,t}+\lambda \theta_{r,i}^{\ell,t-1})}{\|\psi_{r,i}^{\ell,t}+\lambda \theta_{r,i}^{\ell,t-1}\|}
\end{equation}
\EndFor
\State Communicate $\theta_{r,i}^{T} = [\theta_{r,i}^{\ell,T}]_{\ell =1}^{\tot}$ to server

\State \colorbox{blue!20!white}{Communicate $v_{r,i}^T$ to server}

\State \colorbox{red!20!white}{Communicate $\nabla f_i(\bar\theta_{r-1})$ using full local data}

\EndFor

\State Server compute $\bar{\theta}_r = \frac{1}{|D^{r}|} \sum_{i \in D^{r}} \theta_{r,i}^{T}$

\State \colorbox{blue!20!white}{Server compute $\hat{v}_{r} = \max( \hat{v}_{r-1},\frac{1}{|D^{r}|} \sum_{i \in D^{r}} v^T_{r,i} )$}

\State \colorbox{red!20!white}{Compute $\nabla f(\bar \theta_{r-1})=\frac{1}{|D^r|}\sum_{i\in D_r}\nabla f_i(\bar \theta_{r-1})$}

\State \colorbox{red!20!white}{Compute $v_r = \beta_2 v_{r-1}+(1-\beta_2)\nabla f(\bar \theta_{r-1})^2)$}

\State \colorbox{red!20!white}{Update $\hat v_{r}=\max(\hat v_{r-1},v_r$)}

\EndFor
\end{algorithmic}

\end{algorithm}

\newpage

In general, our proposed algorithm can be viewed as a novel extension of LAMB to the more complicated federated setting.
Based on the two recent works regarding locally adaptive FL mentioned above, we present the framework with two instances, Fed-LAMB and Mime-LAMB, as summarized in Algorithm~\ref{alg:ldams} and depicted in Figure~\ref{fig:illustrate}. We differentiate the steps of these two methods by blue\colorbox{blue!20!white}{(Fed-LAMB)}and red\colorbox{red!20!white}{(Mime-LAMB)}boxes surrounding the text. Both methods use layer-wise adaptive LAMB for local updates (Line~13). The update in \eqref{eq:updatelayer} on local workers can be expressed as
\begin{align*}
    \theta \leftarrow \theta-\alpha\frac{\phi(\|\theta\|)}{\|\psi+\lambda\theta\|}(\psi+\lambda\theta),
\end{align*}
where $\phi(\cdot): \mathbb R_+ \mapsto \mathbb R_+$ is a scaling function (usually chosen to be the identity function in practice) and $\lambda$ is the weight decay rate. The main difference between Fed-LAMB and Mime-LAMB is the way the second moment $\hat v$ is synchronized, i.e., the dimension-wise adaptive learning rate.
Both methods maintain a global $\hat v$ at the central server:
\begin{itemize}
    \item \colorbox{blue!20!white}{Fed-LAMB (Line 20)}: at the end of each round, clients $i$ communicates the local $v_{i}$; the server updates the global $\hat v$ by max operation with the averaged $v$, and sends back the $\hat v$.

    \item \colorbox{red!20!white}{Mime-LAMB (Line 21-23)}: in each round $r$, the client computes and transmits the gradient at the global model $\bar\theta_r$ using full local data; the server updates the global $v$ and $\hat v$ in the same manner as AMSGrad.
\end{itemize}
Conceptually, both approaches aim at alleviating the impact of data heterogeneity by ``globally'' reconciling the adaptive learning rates. Note that Mime-LAMB needs to calculate the gradients twice, leading to double the computational cost as Fed-LAMB.

\newpage

\noindent\textbf{Data Heterogeneity:} Dealing with data heterogeneity is an important topic in federated learning. Works have been conducted (e.g.,~\citet{karimireddy2019scaffold}) on designing specific strategies to alleviate the negative influence of non-IID data by using techniques like control variate. We note that our scheme is, in some sense, naturally capable of balancing the heterogeneity in different local data distributions. This is largely due to the ``moment sharing'' steps in Algorithm~\ref{alg:ldams}), where the adaptive learning rates guided by the second moment estimation are aggregated among clients periodically. In~\citet{chen2020toward} and~\citet{karimireddy2020mime}, the authors have shown that Fed-AMS and Mime would perform much worse, or even diverge, without aggregating and sharing the second moment $\hat v$ (please refer to the papers for details). Intuitively, synchronizing $\hat v$ makes all the clients “on the same pace” which is crucial for the convergence of locally adaptive FL methods. We will provide more discussion on this moment synchronization in the next section.

\section{Theoretical Analysis}\label{sec:theory}

For conciseness, we summarize in Table~\ref{tab:notationsapp} some important notations that will be used in our analysis.

\begin{table}[h]

\begin{center}

\begin{tabular}{r c p{12cm} }
\toprule
$R, T$ & $\eqdef$ &  Number of communications rounds and local iterations (resp.)\\
$n, D, i$ & $\eqdef$ &  Total number of clients, portion sampled uniformly and client index \\
$\tot, \ell$ & $\eqdef$ &  Total number of layers in the DNN and its index \\
$\phi(\cdot)$ & $\eqdef$ &  Scaling factor in Fed-LAMB update\\
$\bar{\theta}$ & $\eqdef$ &  Global model (after periodic averaging)\\
$\psi_{r,i}^{t}$ & $\eqdef$ &  ratio computed at round $r$, local iteration $t$ and for device $i$. $\psi_{r,i}^{\ell,t}$ denotes its component at layer $\ell$\\
\bottomrule
\end{tabular}
\end{center}
\caption{Summary of notations used in the paper.}
\label{tab:notationsapp}
\end{table}

We need the following analytical assumptions.

\begin{assumption}\label{ass:smooth}(Smoothness)
For all $i \in \inter$ and $\ell \in \llbracket \tot \rrbracket$, the local loss function is $L_\ell$-smooth: $\norm{\nabla f_i (\theta^\ell) - \nabla f_i (\vartheta^\ell)} \leq L_\ell \norm{\theta^\ell-\vartheta^\ell}$.
\end{assumption}
\begin{assumption}\label{ass:boundgrad}(Unbiased and bounded gradient)
The stochastic gradient is unbiased for $\forall r,t,i$: $\EE[g_{r,i}^t] = \nabla f_i(\theta_r^t)$ and bounded by $\norm{g_{r,i}^t} \leq M$.
\end{assumption}
\begin{assumption}\label{ass:var}(Bounded variance) The stochastic gradient admits (\emph{locally}) $\EE[|g_{r,i}^j - \nabla f_i(\theta_r)^j|^2] < \sigma^2$, and (\emph{globally}) $ \frac{1}{n} \sum_{i=1}^n ||\nabla f_{i}(\theta_r) - \nabla f(\theta_r)||^2] < G^2$.
\end{assumption}
Assumption~\ref{ass:smooth} and Assumption~\ref{ass:boundgrad} are commonly used in the analysis of adaptive gradients methods~\citep{reddi2019convergence,Proc:Chen_ICLR19,karimireddy2020mime,reddi2020adaptive}, Assumption~\ref{ass:var} characterizes the data heterogeneity among local devices, and $G = 0$ when local data are IID Following~\citep{you2019large}, we use the following assumption on the scaling function $\phi$.
\begin{assumption}\label{ass:phi}
For any $a \in \rset_+$, there exist $\phi_m>0,\phi_M>0$ such that $\phi_m \leq  \phi(a) \leq \phi_M$.
\end{assumption}
We now state our main result regarding the convergence rate of the proposed Algorithm~\ref{alg:ldams}.

\begin{theo}\label{th:multiple update}
Under \textbf{Assumption~\ref{ass:smooth}-Assumption~\ref{ass:phi}}, consider $\{\overline{\theta_r}\}_{r>0}$ obtained from Algorithm~\ref{alg:ldams} with a constant learning rate $\alpha$. Let $\lambda = 0$. Then, for any round $R > 0$,~we~have
\begin{align}
\frac{1}{R}\sum_{r=1}^R  \EE\left[ \left\| \frac{\nabla f(\overline{\theta_r})}{\hat v_r^{1/4}}   \right \|^2 \right] &\leq    \sqrt{\frac{M^2 p}{n}}  \frac{ \triangle}{\tot \alpha R}+\frac{4 \alpha^2 \overline{L} M^2 (T-1)^2 \phi_M^2 (1-\beta_2)p}{\sqrt{\epsilon}} \notag \\
&+4\alpha \frac{M^2}{\sqrt{\epsilon}} +      \frac{\phi_M   \sigma^2}{R n} \sqrt{\frac{1 - \beta_2}{M^2 p}  } + 4\alpha \left[ \phi_M^2\sqrt{M^2+p\sigma^2} \right]     \notag\\
& +4  \frac{\alpha^2 \overline{L}}{\sqrt{\epsilon}}  M^2 (T-1)^2 G^2 (1-\beta_2)p +4\alpha \left[\phi_M \frac{\tot \sigma^2}{\sqrt{n}}\right], \label{bound1multiple}
\end{align}
where $\triangle=\EE[f(\bar{\theta}_1)]  - \min \limits_{\theta \in \Theta} f(\theta)$ and $\overline{L}=\sum_{\ell=1}^\tot L_\ell$.
\end{theo}

Note that this result holds for both Fed-LAMB and Mime-LAMB variants. Also, the manifestation of $p$ in the rate is because the variance bound is assumed on each dimension in Assumption~\ref{ass:var}. This dependency on $p$ can be removed when Assumption~\ref{ass:var} is assumed globally, which is also common in optimization literature.

Using a uniform bound on the moment $\|\hat v_r \| \leq M^2$ and by choosing a suitable decreasing learning rate, we have the following simplified statement.

\vspace{0.1in}
\begin{coro}\label{coro:main}
Under the same setting as Theorem~\ref{th:multiple update}, with $\alpha = \mathcal{O}(\frac{1}{ \sqrt{ \tot R}})$, it holds that
\begin{align} \label{coro:rate}
\frac{1}{R}\sum_{r=1}^R  \EE\left[ \left\| \nabla f(\overline{\theta_r})   \right \|^2 \right] \leq \mathcal{O}\left( \frac{\sqrt p}{\sqrt{n\tot R}}+\frac{\sqrt\tot \sigma^2 }{\sqrt{nR}}  + \frac{G^2(T-1)^2p}{R\tot}\right).
\end{align}
\end{coro}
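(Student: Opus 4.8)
The plan is to obtain the corollary purely as a post-processing of the bound in Theorem~\ref{th:multiple update}, in two moves: stripping the $\hat v_r^{-1/4}$ weighting off the left-hand side, and specializing the step size. For the first move, I would invoke the uniform coordinatewise bound $\hat v_r \preceq M^2$. This holds because Assumption~\ref{ass:boundgrad} gives $\|g_{r,i}^t\|\le M$, hence $(g_{r,i}^t)^2 \preceq M^2$ coordinatewise; each local $v$-update is a convex combination of such squared gradients and the server $\max$-aggregation preserves the bound, while the initialization $\hat v_0=\epsilon$ lies below $M^2$. Since then $\hat v_{r,j}^{1/2}\le M$, we get the pointwise inequality $\|\nabla f(\overline{\theta_r})/\hat v_r^{1/4}\|^2 \ge M^{-1}\|\nabla f(\overline{\theta_r})\|^2$, so multiplying the entire right-hand side of Theorem~\ref{th:multiple update} by $M$ produces an upper bound on $\frac1R\sum_r \EE\|\nabla f(\overline{\theta_r})\|^2$, which is exactly the quantity in the corollary.

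Next I would substitute $\alpha=\Theta(1/\sqrt{\tot R})$. The structural point is that the seven terms of Theorem~\ref{th:multiple update} fall into three groups by their $\alpha$-dependence: one $\mathcal{O}(1/(\tot\alpha R))$ term, three $\mathcal{O}(\alpha)$ terms, and two $\mathcal{O}(\alpha^2)$ ``drift'' terms carrying the $(T-1)^2$ factor. The chosen rate is precisely the one balancing the decreasing $1/(\tot\alpha R)$ contribution against the increasing $\mathcal{O}(\alpha)$ contributions: with $\alpha^2=\Theta(1/(\tot R))$ both collapse to $\Theta(1/\sqrt{\tot R})$, whereas the $\mathcal{O}(\alpha^2)$ terms decay faster, at $\Theta(1/(\tot R))$. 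This balancing is the conceptual reason $\alpha=\Theta(1/\sqrt{\tot R})$ is the ``suitable'' schedule.

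I would then read off the order of each term, treating the problem constants $M,\phi_M,\phi_m,\epsilon,\beta_1,\beta_2,\overline L,\triangle$ as $\mathcal{O}(1)$ while keeping explicit dependence on $n,\tot,R,p,\sigma,G,T$. The $1/(\tot\alpha R)$ term becomes $\mathcal{O}(\sqrt{p/n}\,/\sqrt{\tot R})=\mathcal{O}(\sqrt p/\sqrt{n\tot R})$, the first displayed term; the $4\alpha\phi_M\tot\sigma^2/\sqrt n$ term becomes $\mathcal{O}(\tot\sigma^2/(\sqrt n\sqrt{\tot R}))=\mathcal{O}(\sqrt\tot\,\sigma^2/\sqrt{nR})$, the second; and the heterogeneity drift term $\alpha^2\overline L M^2(T-1)^2G^2(1-\beta_2)p$ becomes $\mathcal{O}(G^2(T-1)^2p/(\tot R))$, the third. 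It remains to check that the other terms are dominated: the remaining $\mathcal{O}(\alpha)$ terms are of the same $1/\sqrt{\tot R}$ order as (or smaller than) the first term, the $\sigma^2/(Rn\sqrt p)$ term decays as $1/R$ and is strictly lower order, and the second drift term shares the $(T-1)^2p/(\tot R)$ structure already present in the third displayed term.

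The main obstacle is this final collection step rather than any single inequality: one must pin down the constant conventions under which the three displayed terms genuinely dominate. The most delicate points are the factor $\sqrt{M^2+p\sigma^2}$ in the fifth term of Theorem~\ref{th:multiple update} — which must be folded into the noise constants for the stated $\mathcal{O}(\sqrt p/\sqrt{n\tot R})$ form to survive, since otherwise an $\mathcal{O}(\sqrt p/\sqrt{\tot R})$ contribution lacking the $1/\sqrt n$ speedup would appear — and, relatedly, confirming that none of the $\alpha$-linear terms quietly erase the linear-speedup factor $1/\sqrt n$ that the first two terms advertise. I would therefore make the regime and constant conventions explicit, stating the corollary as the leading-order behavior in $R$ (with $n,\tot,p,\sigma,G,T$ tracked and the optimization and noise constants absorbed) rather than as an unconditional termwise domination.
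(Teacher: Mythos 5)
Your argument is exactly the route the paper takes: the paper gives no written proof of Corollary~\ref{coro:main}, only the remark preceding it that one applies the uniform bound $\|\hat v_r\|\le M^2$ to remove the $\hat v_r^{1/4}$ weighting and then substitutes $\alpha=\mathcal{O}(1/\sqrt{\tot R})$, which is precisely your two moves. Your closing caveat is also well placed: the terms $4\alpha M^2/\sqrt{\epsilon}$ and $4\alpha\phi_M^2\sqrt{M^2+p\sigma^2}$ in Theorem~\ref{th:multiple update} genuinely lack the $1/\sqrt{n}$ factor (and the second carries a $\sqrt{p}$), so the displayed rate in \eqref{coro:rate} only holds under the constant-absorbing convention you make explicit, a point the paper leaves unstated.
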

\vspace{0.1in}

The leading two terms display a dependence of the convergence rate of Fed-LAMB on the initialization and the local variance of the stochastic gradients (Assumption~\ref{ass:var}). The last term involves the number of local updates $T$, and the global variance $G^2$ characterizing the data heterogeneity. Next, we provide detailed discussion and comparison of our bound to related prior results.

\vspace{0.1in}\noindent\textbf{LAMB bound in~\citet{you2019large}: }
We start our discussion with the comparison of convergence rate of Fed-LAMB with that of LAMB, Theorem 3 in~\citet{you2019large}. In a single-machine setting, the convergence rate of LAMB is $\mathcal O(\sqrt{p}{\sqrt{\tot T}})$ where $T$ is the number of training iterations. Note the convergence rate of Fed-LAMB is different from that of LAMB in the sense that, the convergence criterion is given at the averaged parameters (global model) at the end of each round. In Corollary~\ref{coro:main}, our rate would match LAMB if we take number of local step $T=1$. This also holds true for any fixed $T$ and $R$ sufficiently large. In addition, the $\mathcal O(\frac{1}{\sqrt{nR}})$ rate of Fed-LAMB implies a \textit{linear speedup} effect: the number of iterations to reach a $\delta$-stationary point of Fed-LAMB decreases linearly in $n$, which displays the merit of distributed (federated) learning.

\vspace{0.1in}\noindent\textbf{Fed-AMS bound in~\citet{chen2020toward}: }
We now compare our method theoretically with Fed-AMS, the baseline distributed adaptive method developed in~\citet{chen2020toward}. Their results state that when $T\leq \mathcal O(R^{1/3})$, the convergence rate of Fed-AMS is $\mathcal O(\sqrt{\frac{p}{nR}})$. Firstly, when the number of rounds $R$ is sufficiently large, both our rate~\eqref{coro:rate} and the rate of Fed-AMS are dominated by $\mathcal O(\frac{\sqrt p}{\sqrt{n R}})$, improving the convergence rate of the standard AMSGrad, e.g.~\citep{Arxiv:Zhou_18} by $\mathcal O(1/\sqrt n)$. Secondly, in~\eqref{coro:rate}, the last term containing the number of local updates $T$ is small as long as $T^4\leq \mathcal O(\frac{Rh}{G^2})$. If we further assume $h\simeq T$, then we get the same rate of convergence as Fed-AMS with $T\leq \mathcal{O}(R^{1/3})$ local iterations, identical to the condition of Fed-AMS. Moreover, when $G$ is small (less data heterogeneity), the bound on $T$ would increase, i.e., we can conduct more local updates. This is intuitive, for example, when $G=0$ in the IID data setting, $T$ can be very large.

In general, similar to the result that single machine LAMB matches the convergence rate of Adam theoretically, we show that Fed-LAMB achieves the same convergence rate as Fed-AMS, in the federated (distributed) setting, which is the first such result in literature for locally adaptive FL method. Also similar to the findings in~\citet{you2019large}, we will show that Fed-LAMB and its variants provide impressive acceleration empirically, in our experimental study (Section~\ref{sec:numerical}).

\newpage

\vspace{0.1in}\noindent\textbf{Practical Considerations:} Under our analytic setting and conditions mentioned above, the convergence rate of Fed-LAMB also matches many popular federated learning methods in nonconvex optimization, e.g., Fed-SGD~\citep{mcmahan2017communication}, Mime~\citep{karimireddy2020mime} and Adp-Fed~\citep{reddi2020adaptive}, at $\mathcal O(\frac{1}{\sqrt{nRT}})$. In practice, when trained with same number of rounds and $R$ and local iterations $T$, Mime, Fed-LAMB and Fed-Mime all require communicating two tensors, while Fed-SGD, Adp-Fed only communicate one tensor (the local model parameter). Next, we discuss a simple implementation trick of our algorithm that leads to less communication.

\vspace{0.1in}
\noindent\textbf{Extension: lazy synchronization of $\hat v_t$.} As mentioned before, the synchronization of the second moment $\hat v$ local workers keep in step with each other. Yet, may not need to update and broadcast the global $\hat v$ in every round. To reduce the extra communication overhead of transmitting $\hat v$, one trick in practice is to reduce the aggregation frequency of $\hat v$ (e.g., we synchronize $\hat v$ every $Z$ rounds). It can be shown that this ``lazy'' aggregation does not affect the convergence rate of our Fed-LAMB. Yet it can effectively reduce the communication of $\hat v$ by a factor of $Z$, which to a great extent alleviates the extra communication cost of locally adaptive methods. We will also show empirical evidence of this trick in our experiments.

\section{Experiments}\label{sec:numerical}

In this section, we conduct experiments on benchmark datasets with various network architectures to justify the effectiveness of our proposed method in practice. Our method empirically confirms its merit in terms of convergence speed. Basically, Fed-LAMB and Mime-LAMB reduce the number of rounds and thus the communication cost required to achieve a similar stationary point (or test accuracy) than the baseline methods.
In many cases, Fed-LAMB also brings notable improvement in generalization over baselines.

\vspace{0.1in}\noindent\textbf{Methods.} We evaluate the following five FL algorithms, mainly focusing on recent federated optimization approaches based on adaptive methods:
\begin{enumerate}
    \item Fed-SGD~\citep{mcmahan2017communication}, standard federated averaging with local SGD updates.

    \item Adp-Fed (\emph{Adaptive Federated Optimization}, see Appendix~\ref{app:experiment}), the federated adaptive algorithm proposed by~\citep{reddi2020adaptive}. Adp-Fed performs local SGD updates. In each round $r$, the changes in local models, $\triangle_i=w_{r,i}^T-w_{r,i}^0$, $i=1,...,n$, are sent to the central server for an aggregated Adam update.

    \item Fed-AMS~\citep{chen2020toward}, locally adaptive AMSGrad.

     \item Mime~\citep{karimireddy2020mime} with AMSGrad, which performs adaptive local updates with central-server-guided global adaptive learning rate.

    \item Our proposed Fed-LAMB and Mime-LAMB (Algorithm~\ref{alg:ldams}).
\end{enumerate}
For all the adaptive gradient methods, we set $\beta_1=0.9$,~$\beta_2=0.999$ by default~\citep{reddi2019convergence}. We present the results of $n=50$ clients with $0.5$ participation rate, i.e., we randomly pick half of the clients to be active for training in each round, and the local mini-batch size is set as 128. In each round, the training samples are allocated to the active devices, and one local epoch is completed after all the local devices run one pass over their received samples via mini-batch training. Example results with more clients can be found in Figure~\ref{fig:client200}.

We tune the initial learning rate $\alpha$ for each algorithm over a fine grid. For Adp-Fed, there are two learning rates involved (global and local), both of which are tuned. More tuning details can be found in Appendix~\ref{app:experiment}. For Fed-LAMB and Mime-LAMB, the weight decay rate $\lambda$ is tuned from $\{0,0.01,0.1\}$, and we use the identity scaling function for $\phi(\cdot)$. For each run, we report the best test accuracy. The results are averaged over 3 runs, each with same initialization for every method.

\vspace{0.1in}\noindent\textbf{Datasets and models.} We experiment with four popular benchmark image classification datasets: MNIST~\citep{lecun1998mnist}, Fashion MNIST (FMNIST)~\citep{xiao2017fashion}, CIFAR-10~\citep{krizhevsky2009learning} and TinyImageNet~\citep{deng2009imagenet}. For MNIST, we apply 1) a simple multi-layer perceptron (MLP), which has one hidden layer containing 200 cells; 2) Convolutional Neural Network (CNN), which has two max-pooled convolutional layers followed by a dropout layer and two fully-connected layers with 320 and 50 cells respectively. This CNN is also implemented for FMNIST.
For CIFAR-10 and TinyImageNet, we use ResNet-18 ~\citep{Proc:He-resnet16}.

\subsection{Comparison under IID settings}

\begin{figure}[h]

    \begin{center}
        \mbox{\hspace{-0.15in}
        \includegraphics[width=2.25in]{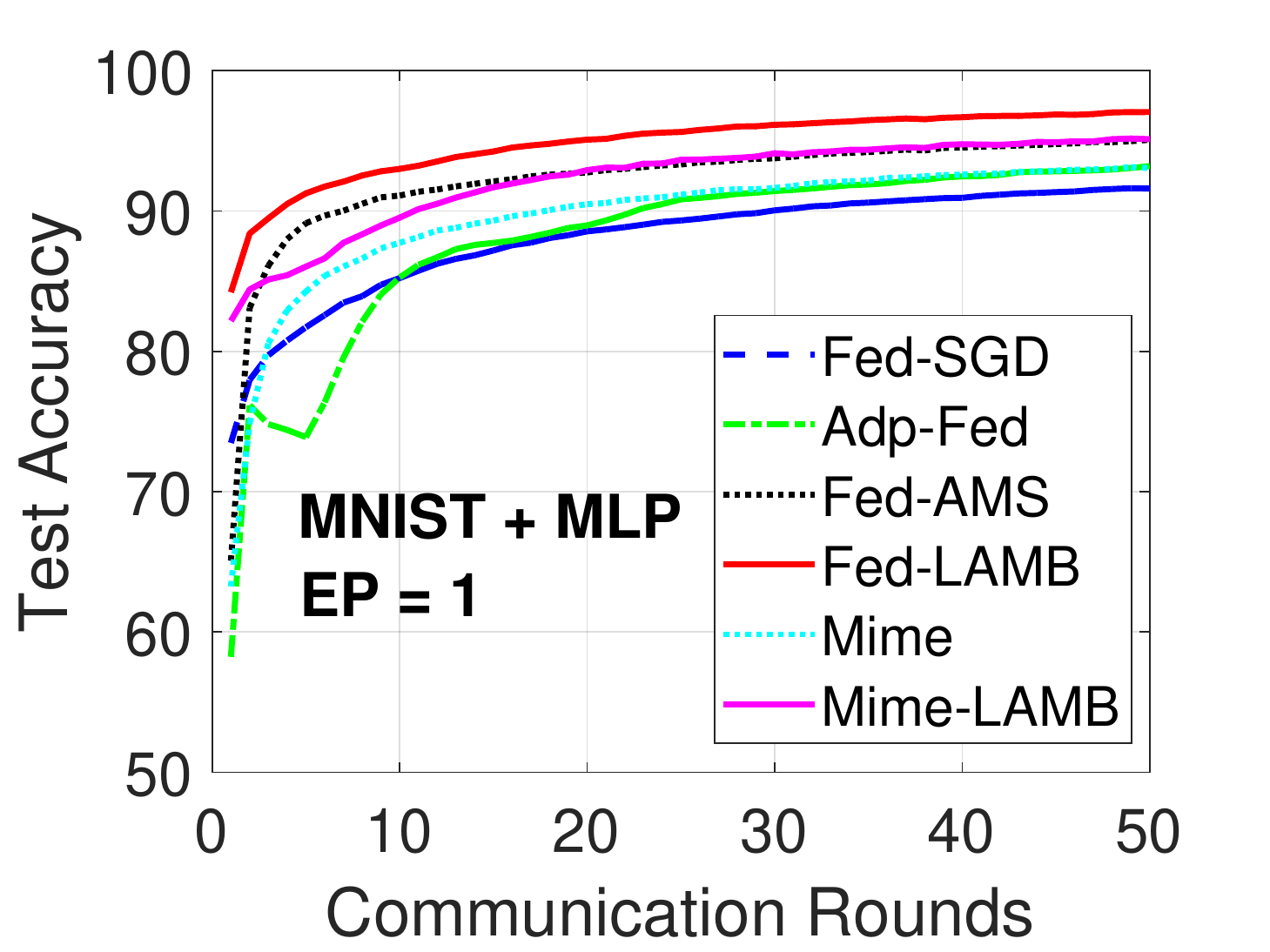}\hspace{-0.1in}
        \includegraphics[width=2.25in]{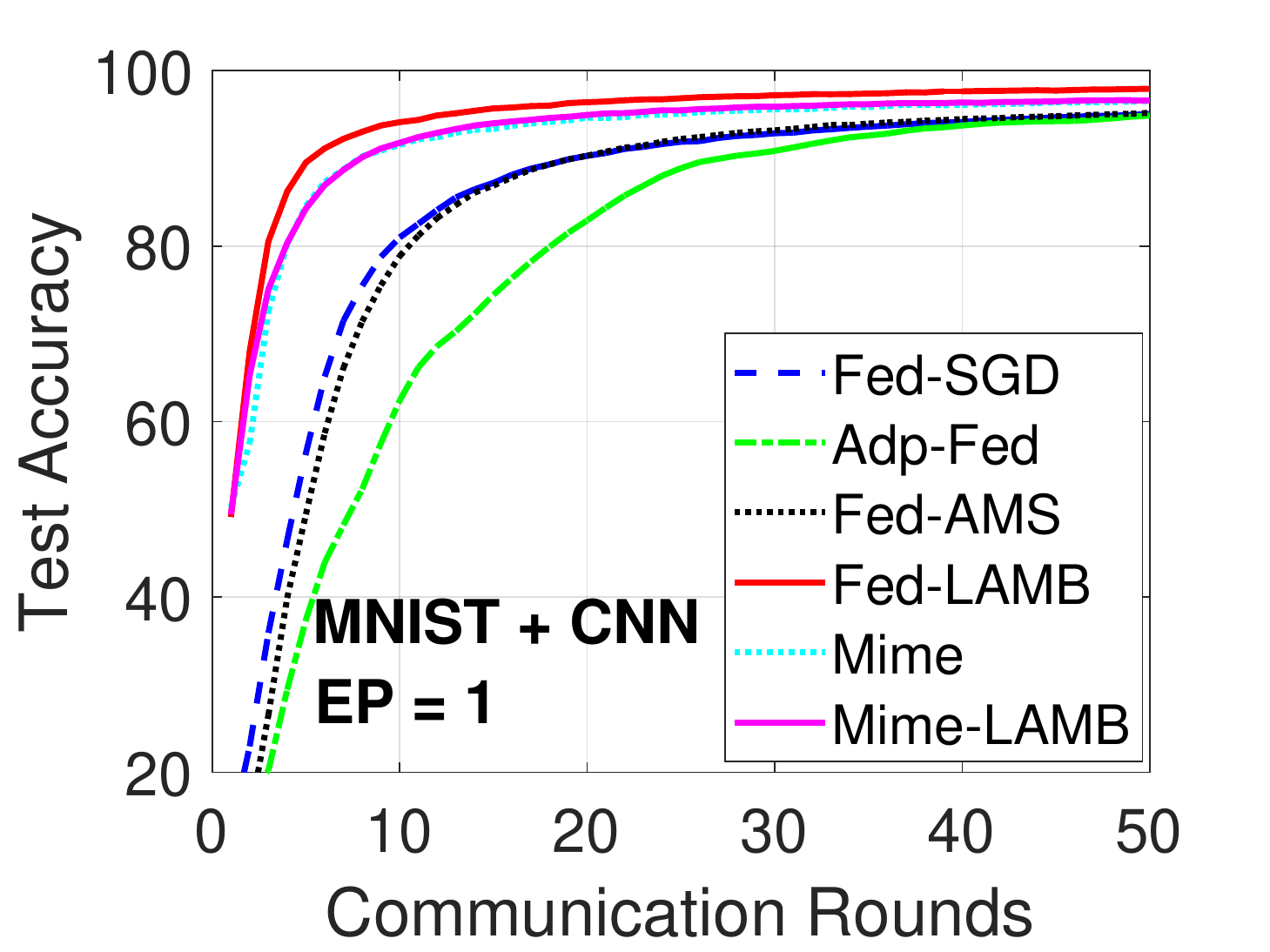}\hspace{-0.1in}
        \includegraphics[width=2.25in]{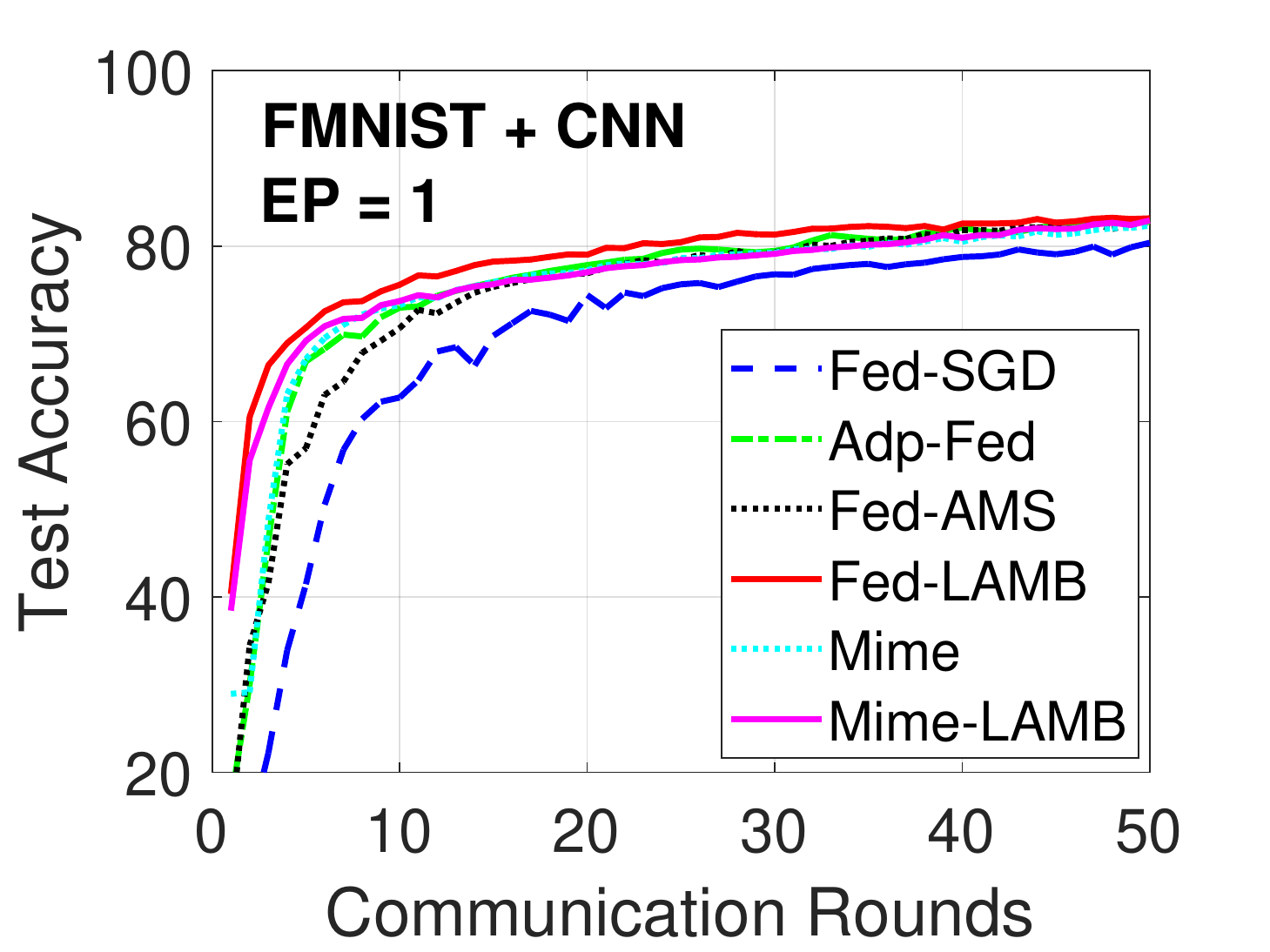}
        }
        \mbox{\hspace{-0.15in}
        \includegraphics[width=2.25in]{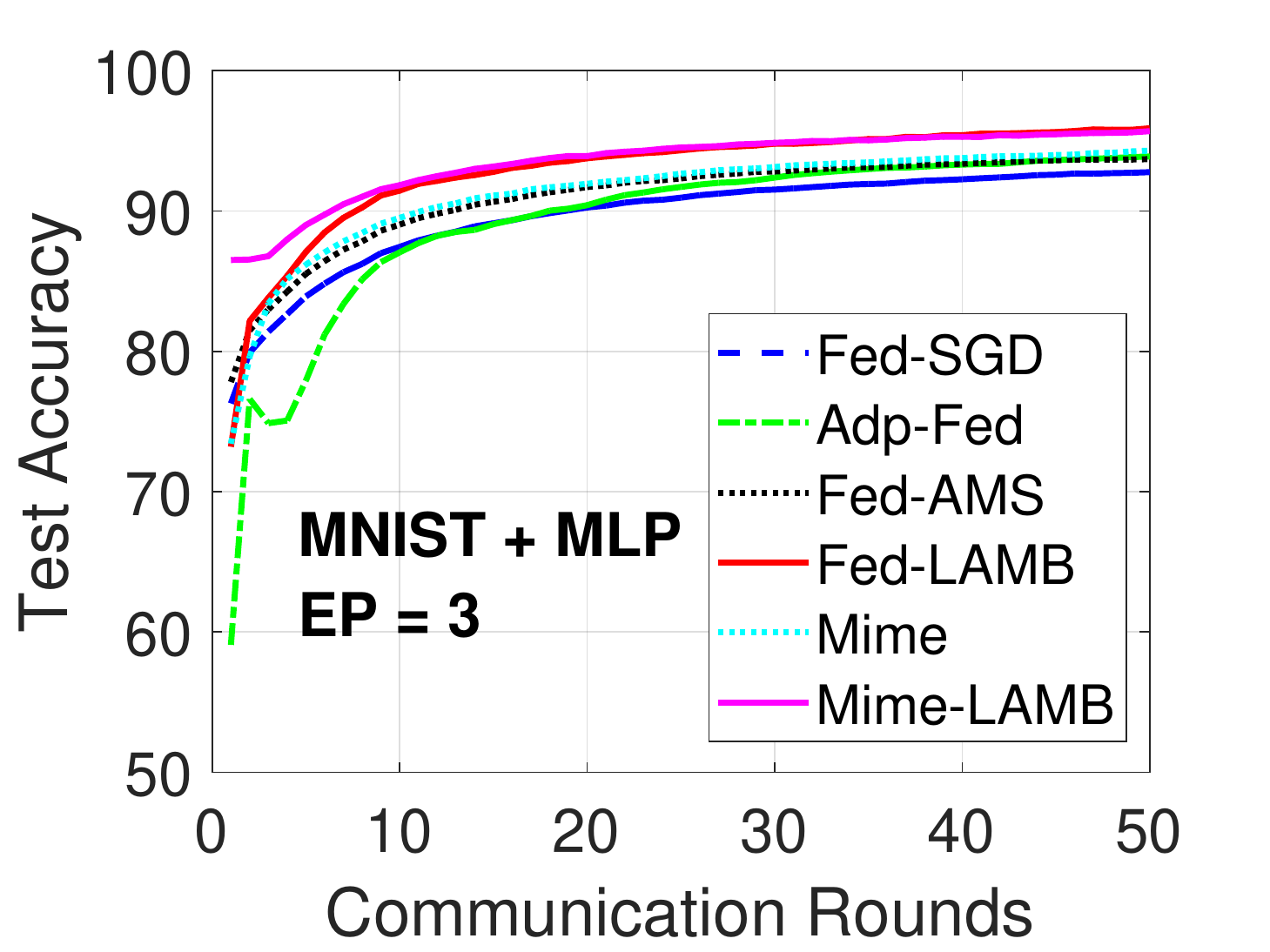}\hspace{-0.1in}
        \includegraphics[width=2.25in]{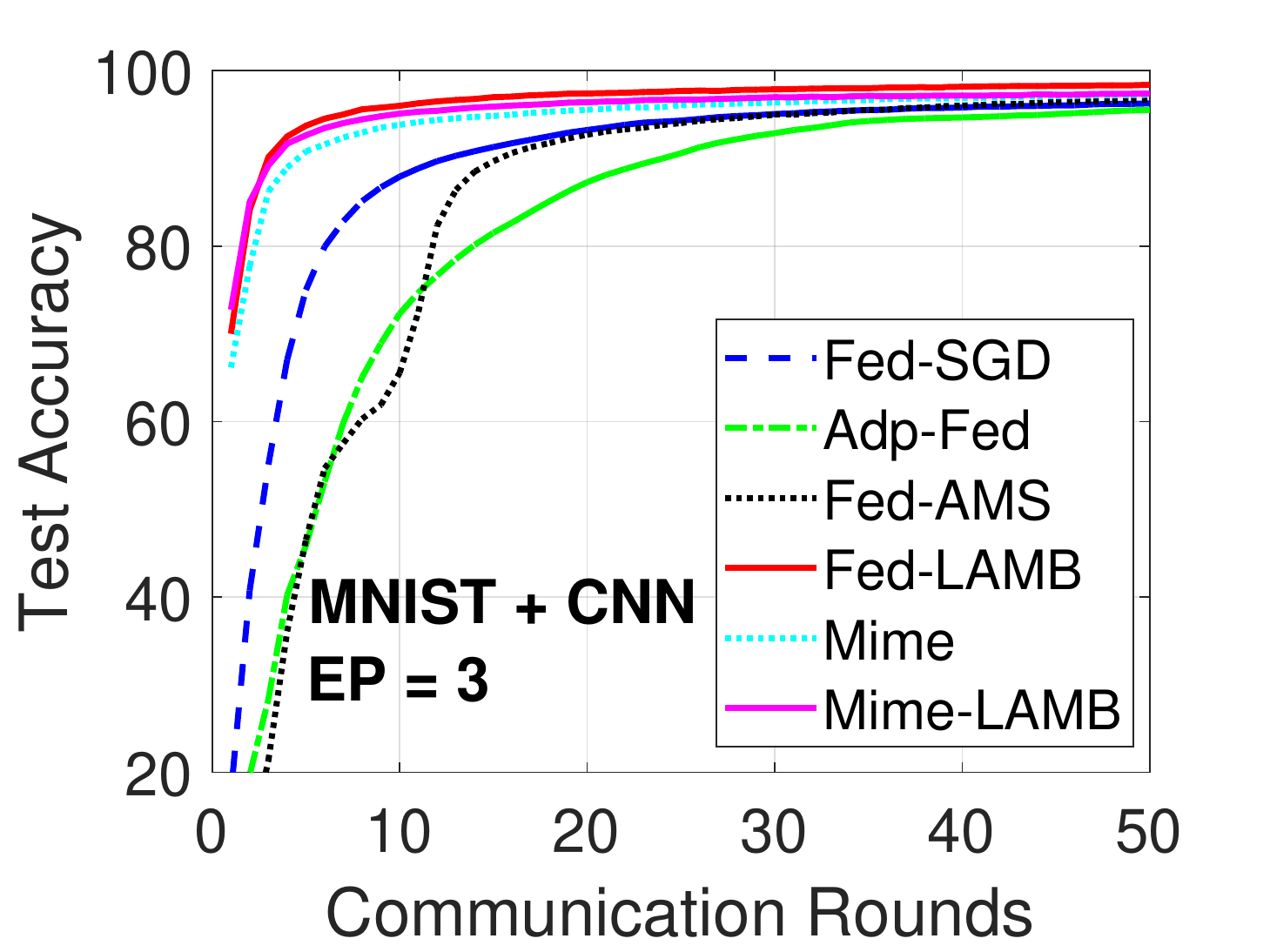}\hspace{-0.1in}
        \includegraphics[width=2.25in]{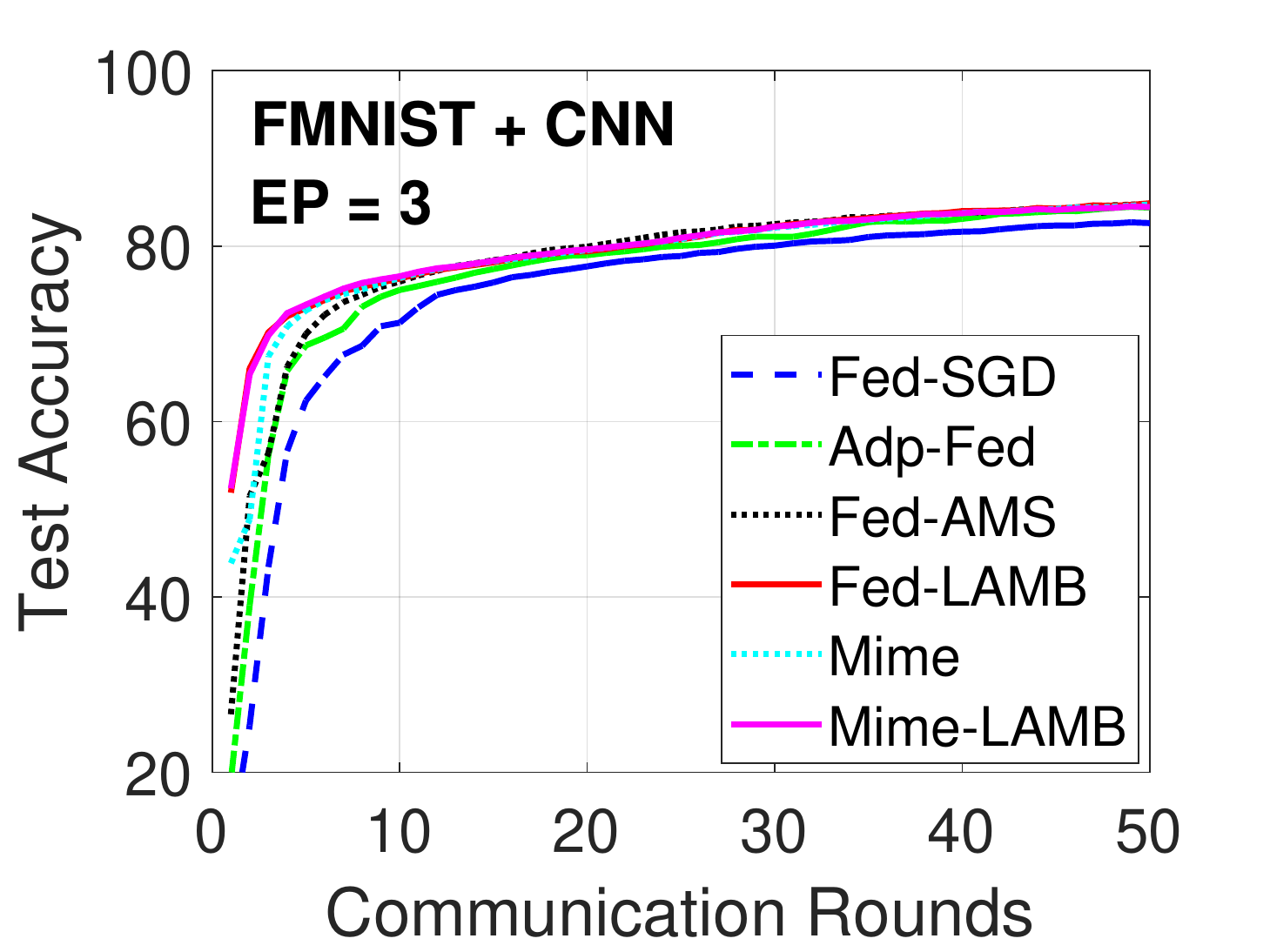}
        }
    \end{center}

\vspace{-0.2in}

	\caption{\textbf{IID data setting}. Test accuracy against the number of communication rounds.
	}
	\label{fig:iid}
\end{figure}

In Figure~\ref{fig:iid}, we report the test accuracy of MLP trained on MNIST, as well as CNN trained on MNIST and FMNIST, where the data are IID allocated among the clients. We test 1 local epoch and 3 local epochs (more local iterations). In all the figures, we observe a clear advantage of Fed-LAMB over the competing methods in terms of the convergence speed. In particular, we can see that Fed-LAMB is able to achieve the same accuracy with fewest number of communication rounds, thus improving the communication efficiency. For instance, this can be observed as follows: on MNIST + CNN (1 local epoch), Fed-AMS requires 20 rounds to achieves 90\% accuracy, while Fed-LAMB only takes 5 rounds. This implies a 75\% reduction in the communication cost. Moreover, on MNIST, Fed-LAMB also leads to improved generalisation performance, i.e., test accuracy. We can draw same conclusions with 3 local epochs. Also, similar comparison holds for Mime-LAMB vs. Mime. In general, the Mime-LAMB variant matches the performance of Fed-LAMB closely.

\subsection{Comparison under non-IID settings}

\begin{figure}[h]
    \begin{center}
        \mbox{
        \hspace{-0.15in}\includegraphics[width=2.25in]{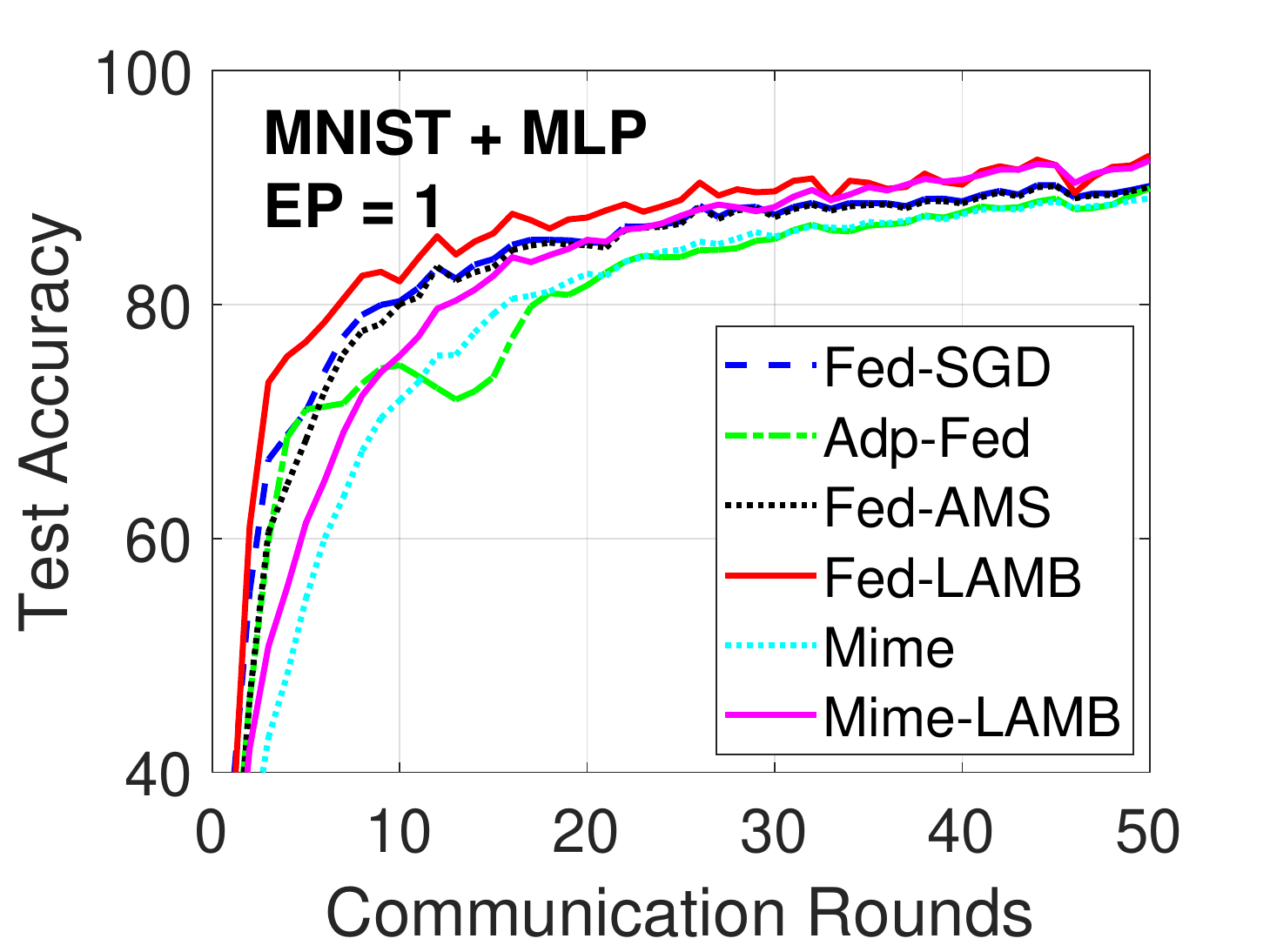}\hspace{-0.1in}
        \includegraphics[width=2.25in]{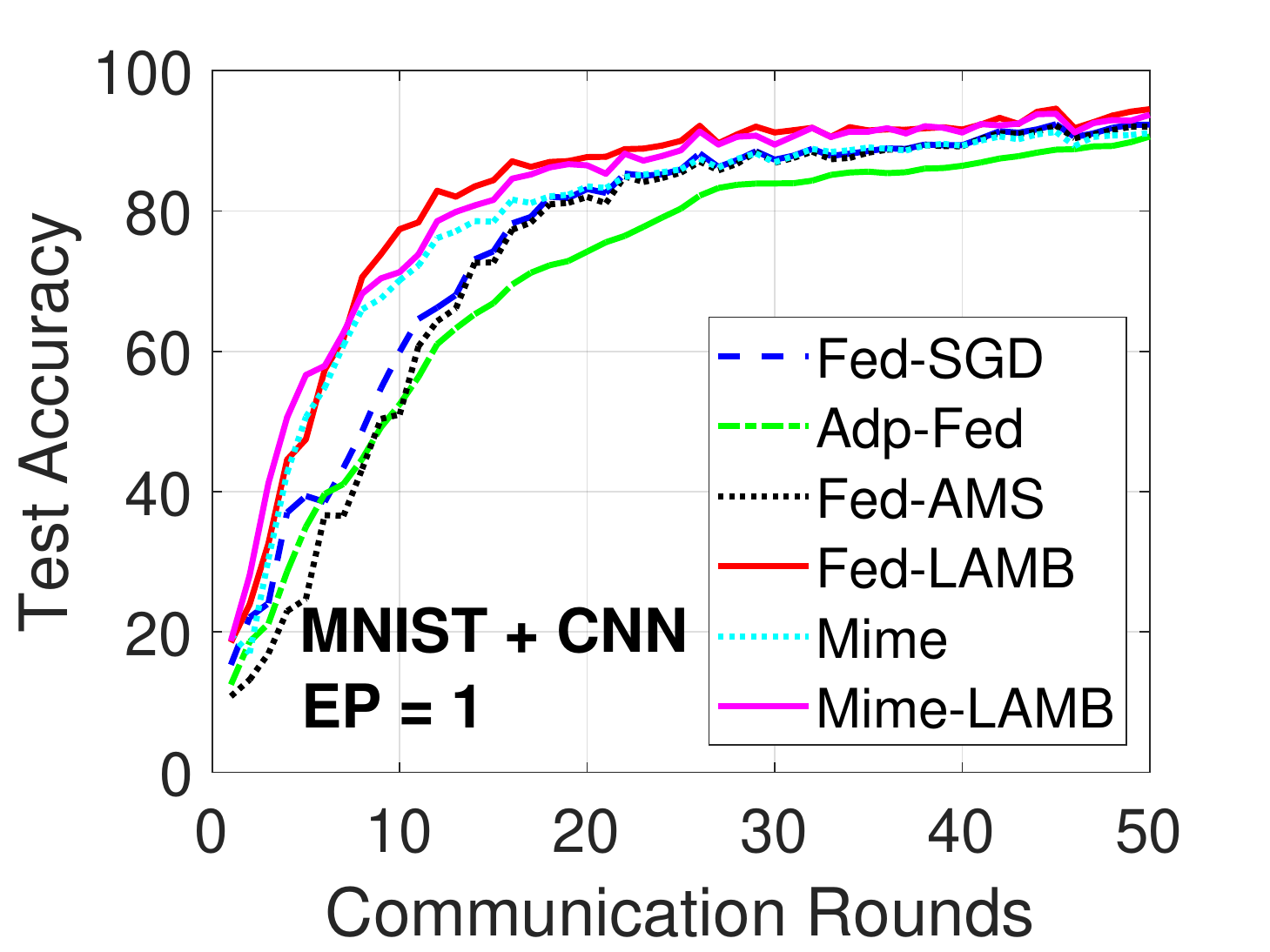}
        \hspace{-0.1in}
        \includegraphics[width=2.25in]{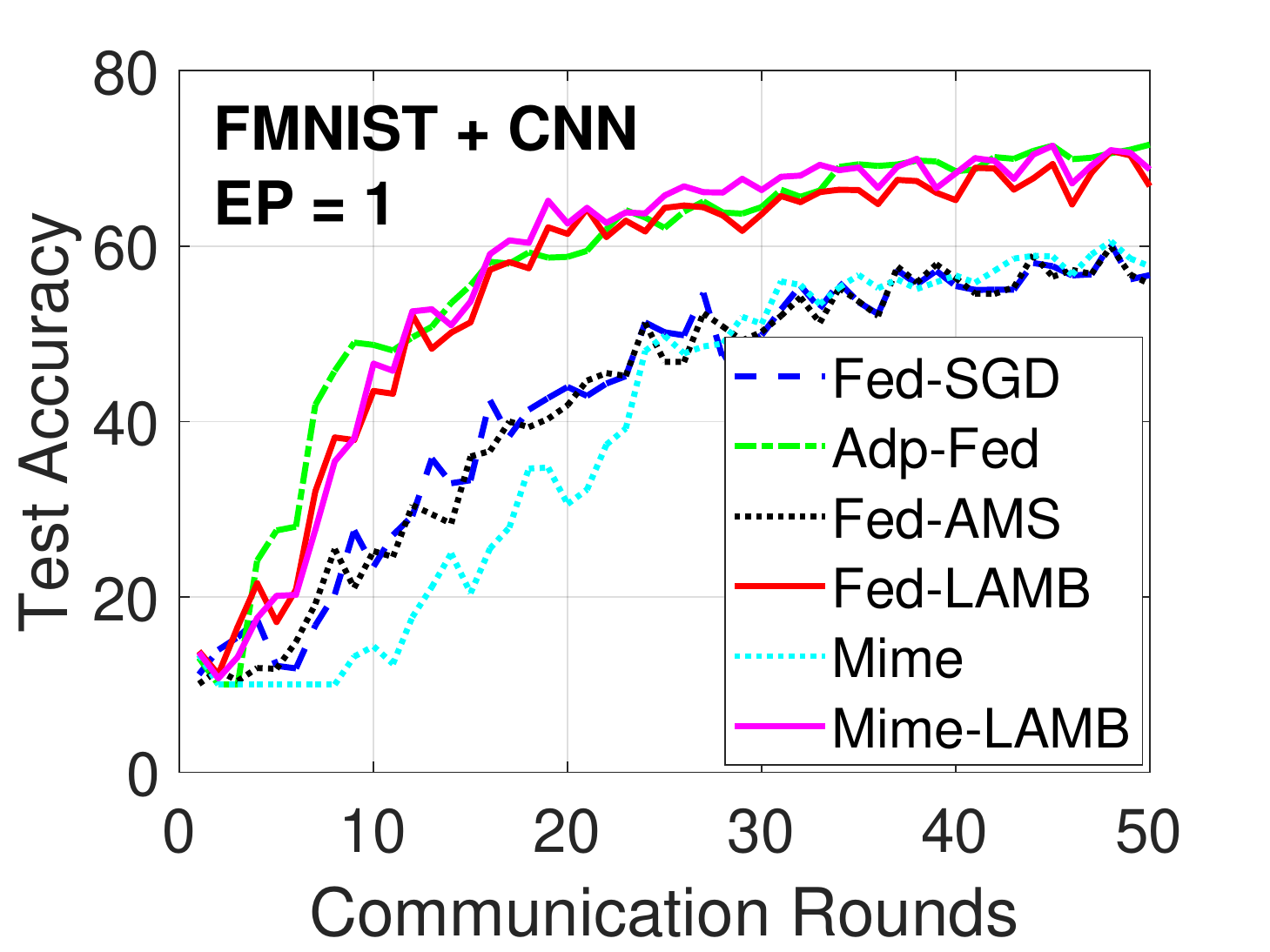}
        }
        \mbox{
        \hspace{-0.15in}\includegraphics[width=2.25in]{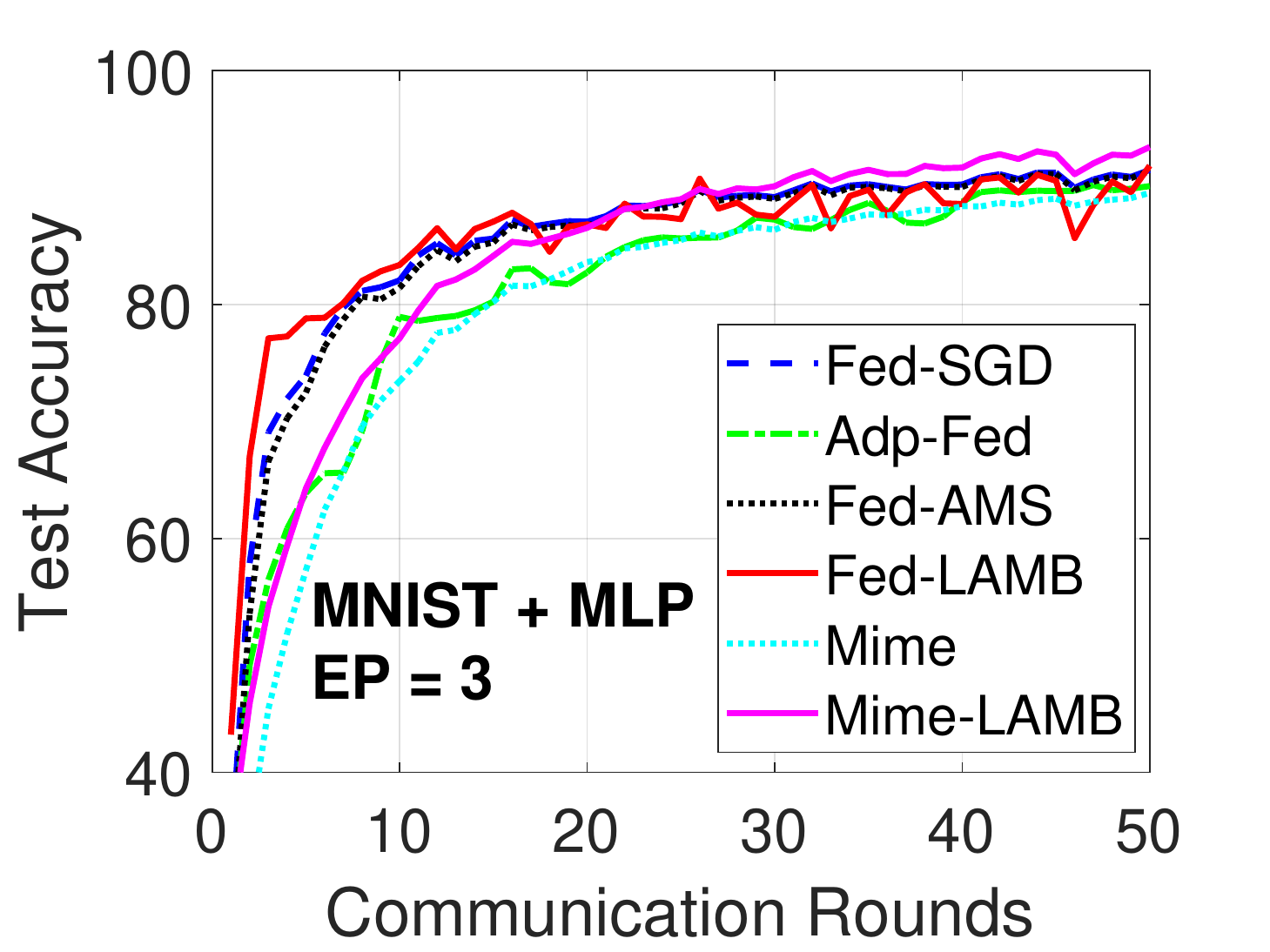}\hspace{-0.1in}
        \includegraphics[width=2.25in]{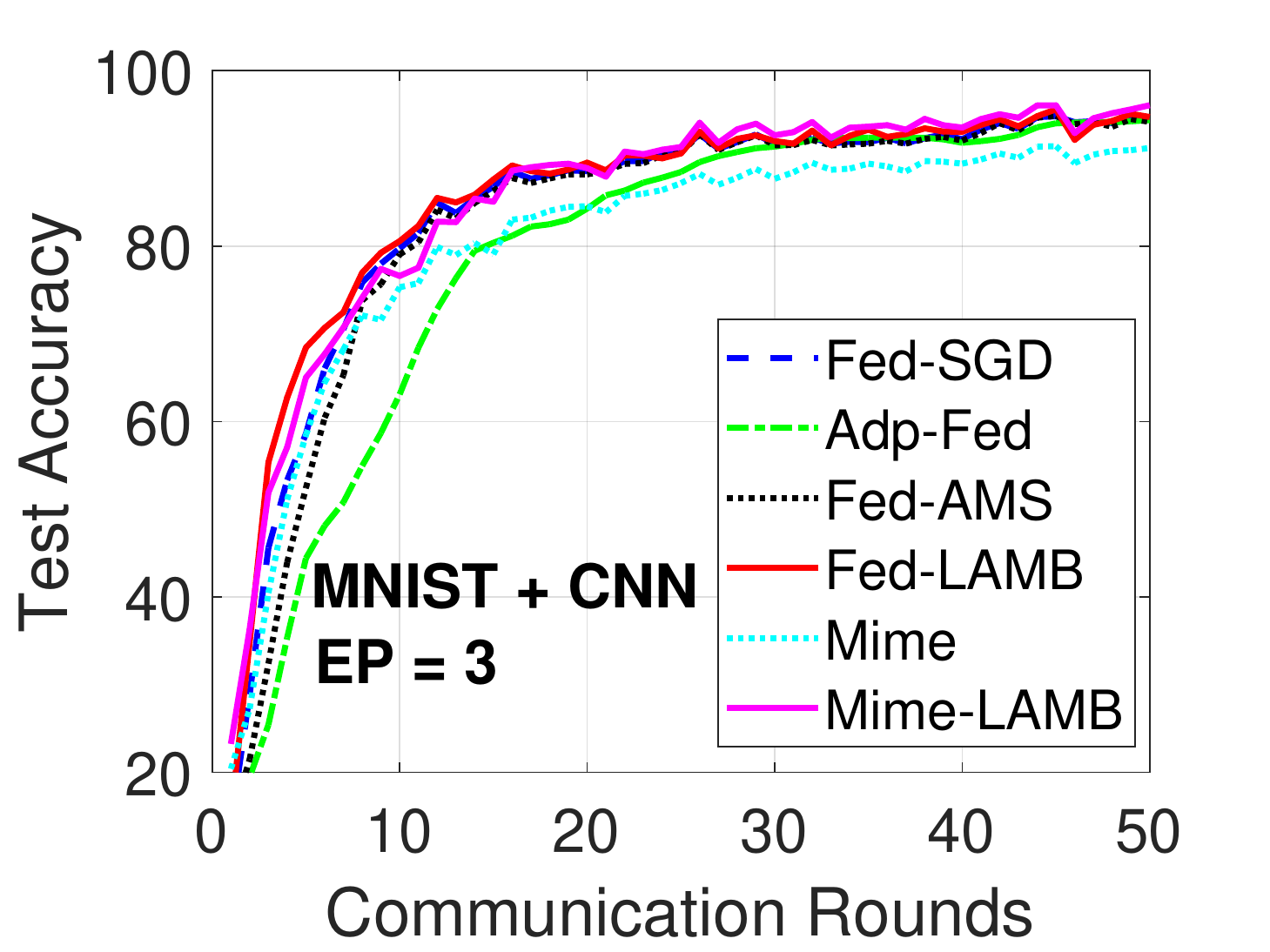}
        \hspace{-0.1in}
        \includegraphics[width=2.25in]{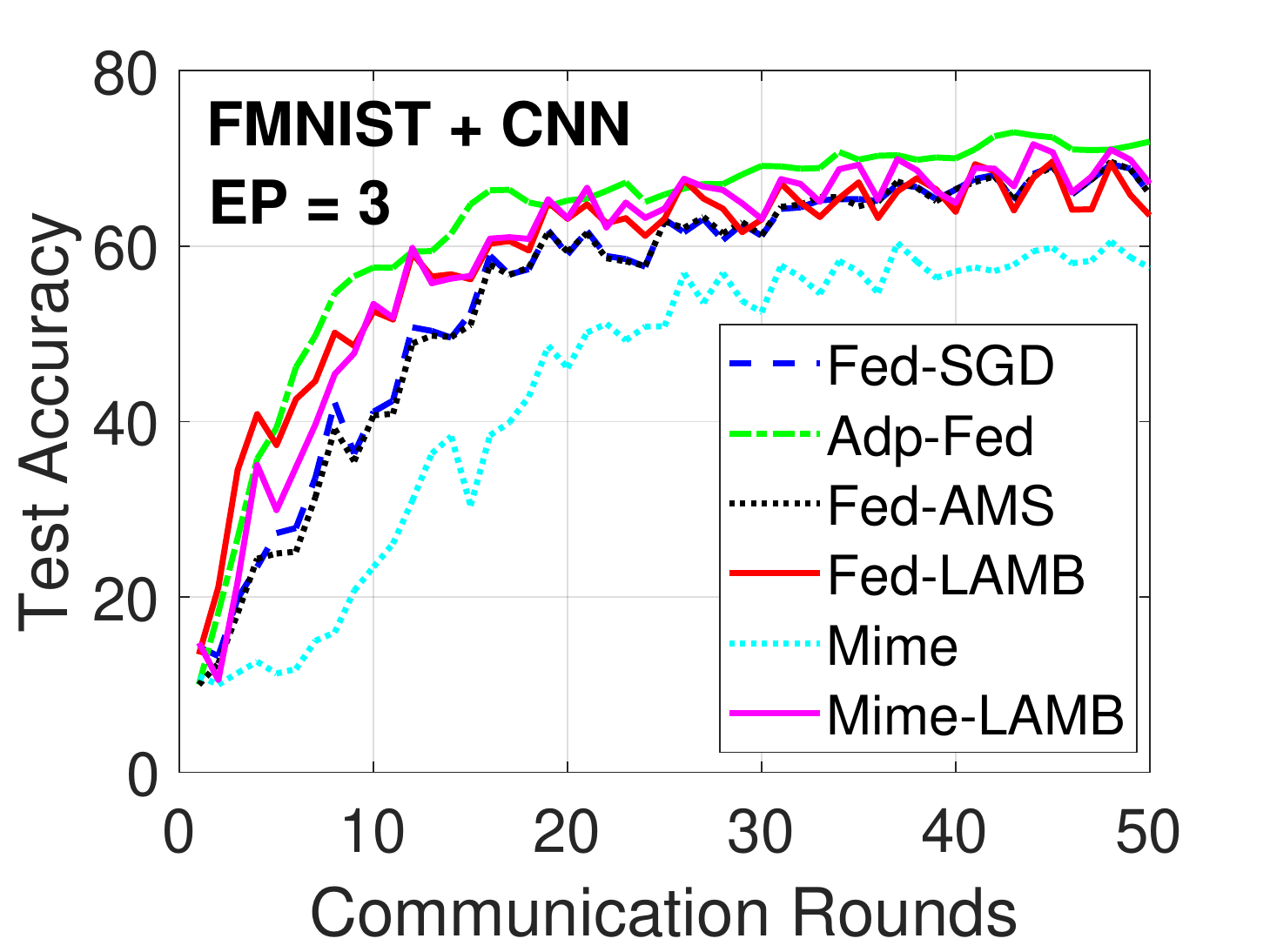}
        }
    \end{center}

\vspace{-0.2in}

	\caption{\textbf{non-IID data setting.} Test accuracy against the number of communication rounds.}
	\label{fig:noniid}	\vspace{0.2in}
\end{figure}

In Figure~\ref{fig:noniid}, we provide the results on  MNIST and FMNIST with non-IID local data distribution. In particular, in each round of federated training, every local device only receives samples from one or two class (out of ten). We see that for experiments with 1 local epoch, in all cases our proposed Fed-LAMB outperforms all the baseline methods. Similar to the IID data setting, Fed-LAMB provides faster convergence speed and achieves higher test accuracy than Fed-SGD and Fed-AMS. The advantage is especially significant for the CNN model, e.g., it improves the accuracy of Fed-SGD and Fed-AMS by more than 10\% on FMNIST at the 50-th round. The other baseline method, Adp-Fed, performs as good as our Fed-LAMB on FMNIST, but worse than other methods on MNIST. Mime-LAMB also considerably improves Mime on all the tasks, see Figure~\ref{fig:noniid}. In general, the two variants Fed-LAMB and Mime-LAMB perform similarly.

\vspace{0.1in}

The relative comparison is basically the same when the local models are trained for 3 epochs before model aggregation. The proposed methods converge faster than the underlying baselines. Adp-Fed performs well on FMNIST but worse on MNIST. Yet, we notice that the advantage of Fed-LAMB becomes less significant than what we observed in Figure~\ref{fig:iid} with IID data. One plausible reason is that when the local data is highly non-IID, the local learning objective functions become very different. Intuitively, with more local steps, learning the local models fast might not always do good to the global model, as local models target at different loss functions. This is consistent with extensive empirical observations in literature that in FL, allowing too many local training steps usually pushes local models too far which may hinder the convergence of the global model.

\newpage

\begin{figure}[h]

  \begin{center}
        \mbox{
        \includegraphics[width=2.5in]{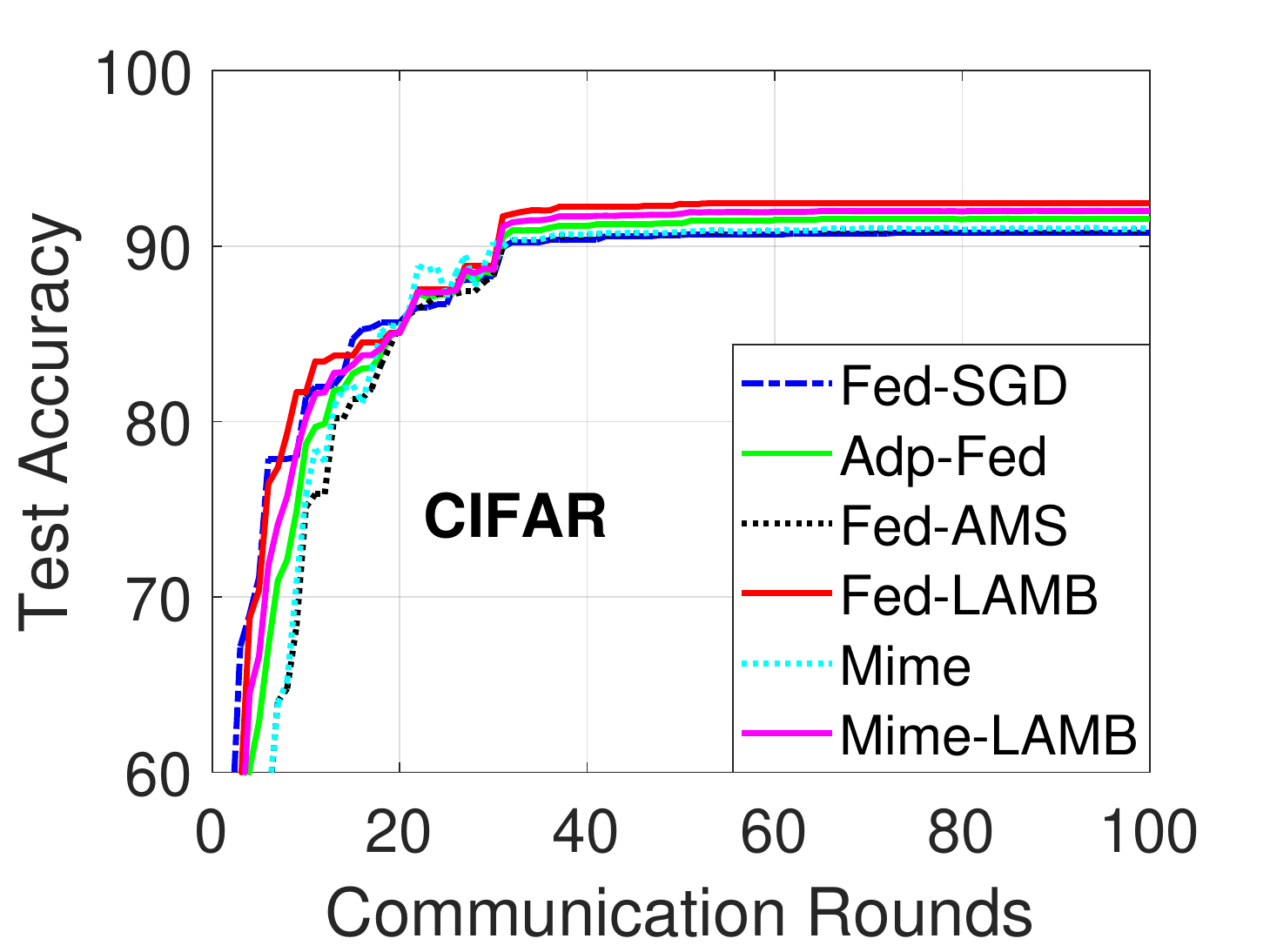}
        \includegraphics[width=2.5in]{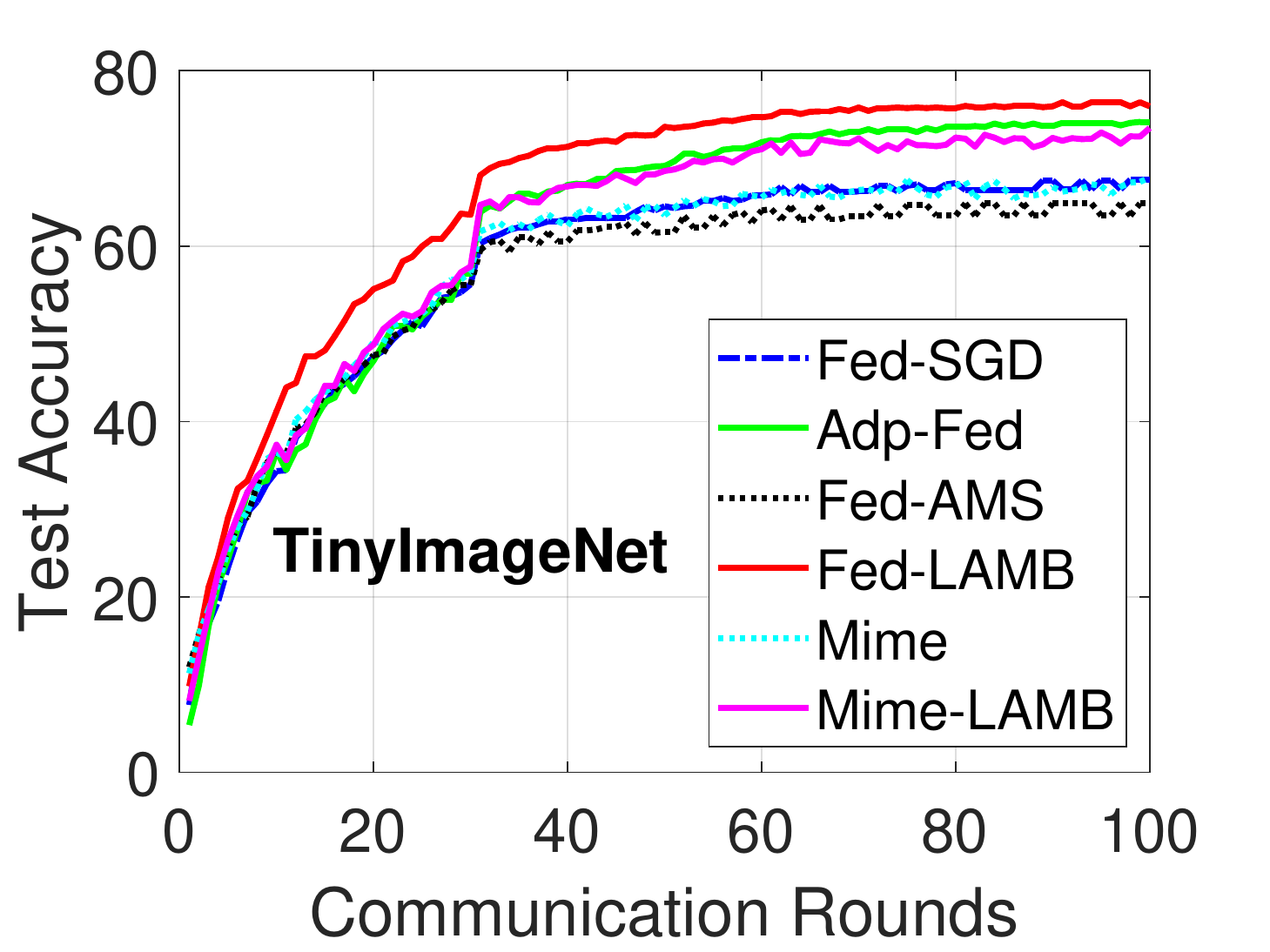}
        }
    \end{center}

\vspace{-0.2in}

	\caption{\textbf{non-IID data.} Test accuracy of CIFAR-10 and TinyImagenet on ResNet-18.
	}
	\label{fig:noniidresnet18}
\end{figure}

In Figure~\ref{fig:noniidresnet18}, we present the results on CIFAR-10 and TinyImageNet datasets trained by ResNet-18. When training these two models, we decrease the learning rate to $1/10$ at the 30-th and 70-th communication round. From Figure~\ref{fig:noniidresnet18}, we can draw similar conclusion as before: the proposed Fed-LAMB is the best method in terms of both convergence speed and generalization accuracy. In particular, on TinyImageNet, we see that Fed-LAMB has a significant advantage over all three baselines. Although Adp-Fed performs better than Fed-SGD and Fed-AMS, it is considerably worse than Fed-LAMB. We report the test accuracy at the end of training in Table~\ref{tab:acc}. Fed-LAMB achieves the highest accuracy on both datasets. Mime-LAMB also substantially improves Mime.

\begin{table}[h]
\centering
\resizebox{1\columnwidth}{!}{%
\begin{tabular}{c|cccccc}
\toprule[1pt]
 & Fed-SGD    & Adp-Fed    & Fed-AMS    & \textbf{Fed-LAMB }   & Mime & \textbf{Mime-LAMB}        \\ \hline
CIFAR-10 & 90.75 $\pm$ 0.48  &91.57 $\pm$ 0.38  & 90.93 $\pm$ 0.22 &  \textbf{92.44 $\pm$ 0.53} & 90.94 $\pm$ 0.13 & \textbf{92.00 $\pm$ 0.21}  \\
TinyImageNet & 67.58 $\pm$ 0.21  &  74.17 $\pm$ 0.43   & 64.86 $\pm$ 0.83& \textbf{76.00 $\pm$ 0.26} & 67.82 $\pm$ 0.24 & \textbf{73.46 $\pm$ 0.25} \\
\toprule[1pt]
\end{tabular}
}
\caption{Test accuracy with ResNet-18 network after 100 communication rounds.}
\label{tab:acc}
\end{table}

\begin{figure}[h]
  \begin{center}
  \mbox{
        \includegraphics[width=2.5in]{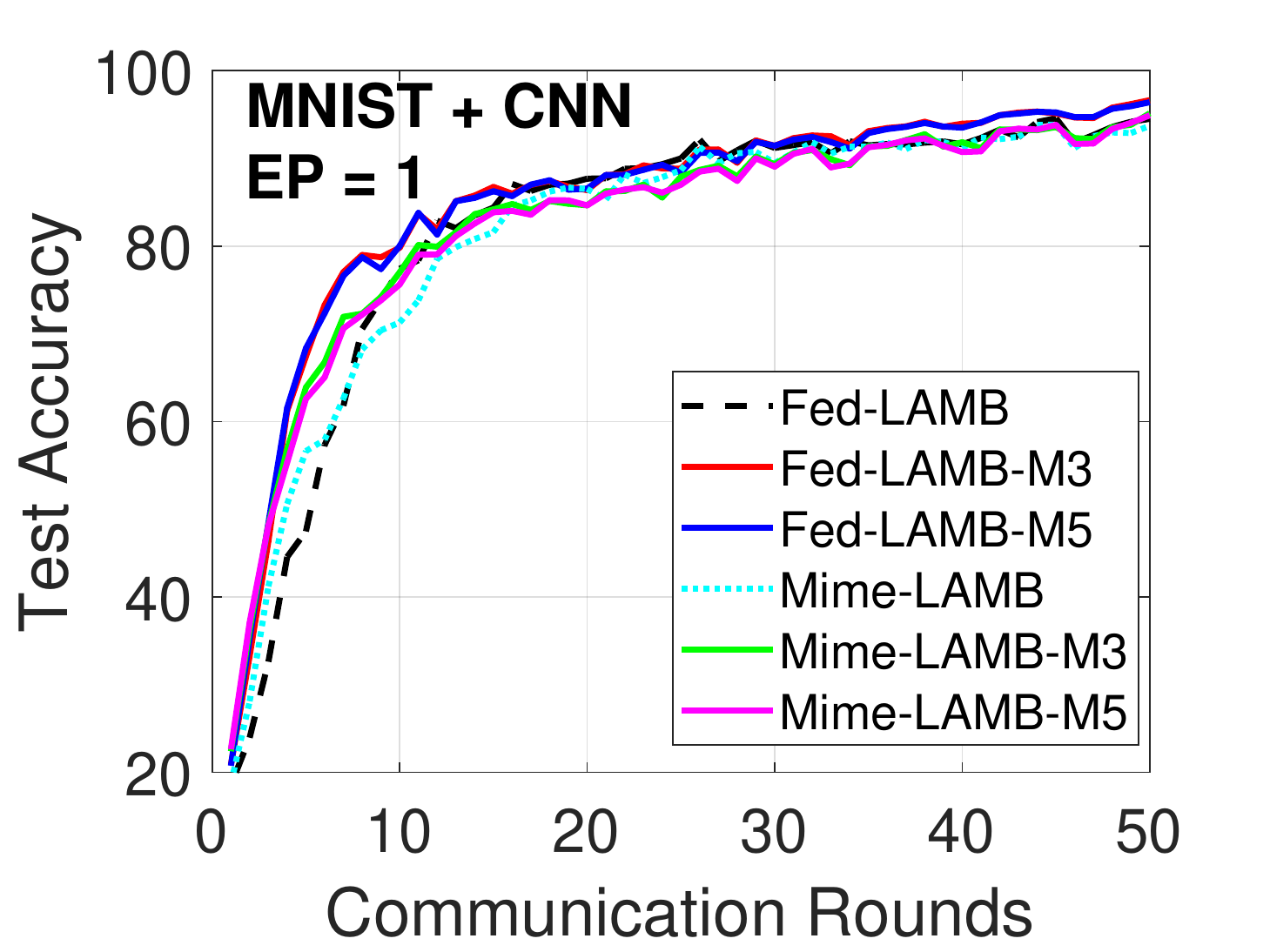}
        \includegraphics[width=2.5in]{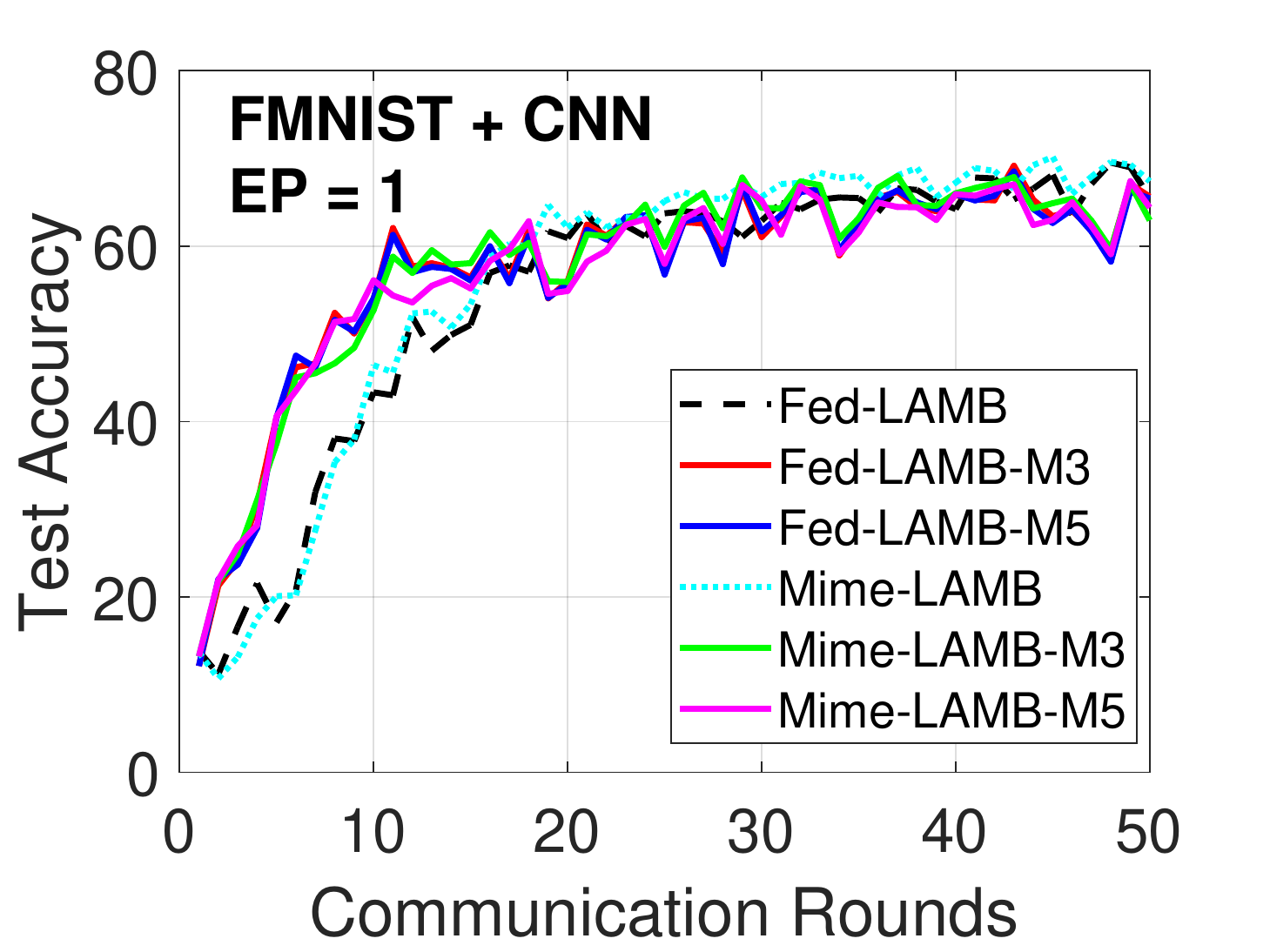}
        }
    \end{center}

\vspace{-0.2in}

	\caption{\textbf{non-IID data.} Fed-LAMB and Mime-Fed with lazy synchronization of $\hat v$.
	}
	\label{fig:lazy}

\end{figure}

In addition, in Figure~\ref{fig:lazy} we present the result of our methods with lazy synchronization of $\hat v$, where the server updates and broadcasts $\hat v$ every $Z=3,5$ rounds, instead of very single round. We see that practically the performance is similar to the standard algorithm with every-round global $\hat v$ updates (sometimes even slightly better).

\newpage

\noindent\textbf{More Workers:} In Figure~\ref{fig:client200}, we provide additional figures with larger number of workers $n=200$, on MNIST and FMNIST with non-IID data. The conclusions stay the same: we see that the proposed Fed-LAMB and Mime-LAMB perform much better than the baseline algorithms, with faster convergence and better accuracy at the end of 100 FL training rounds.

\begin{figure}[t]
  \begin{center}
  \mbox{
    \includegraphics[width=2.5in]{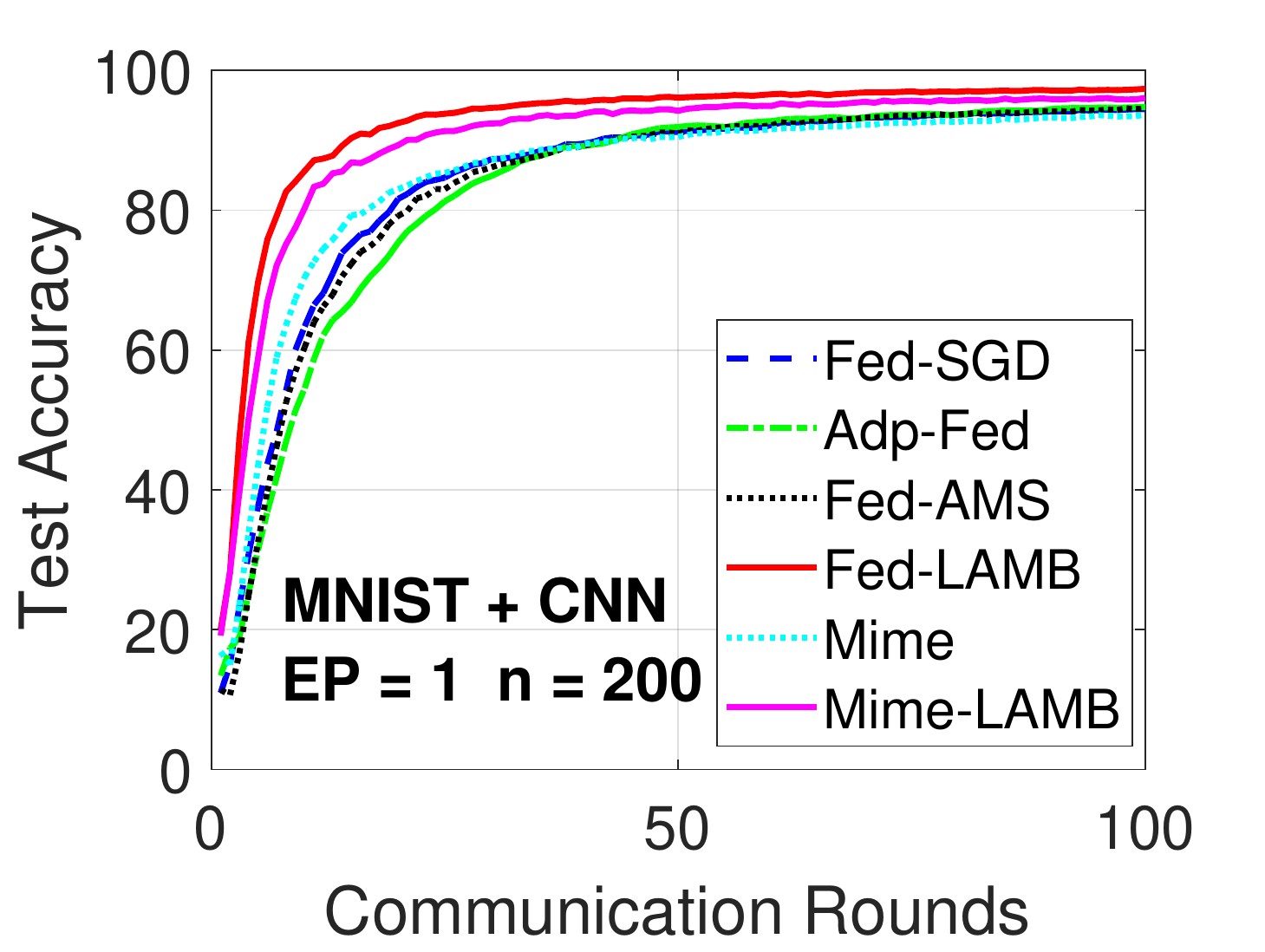}
    \includegraphics[width=2.5in]{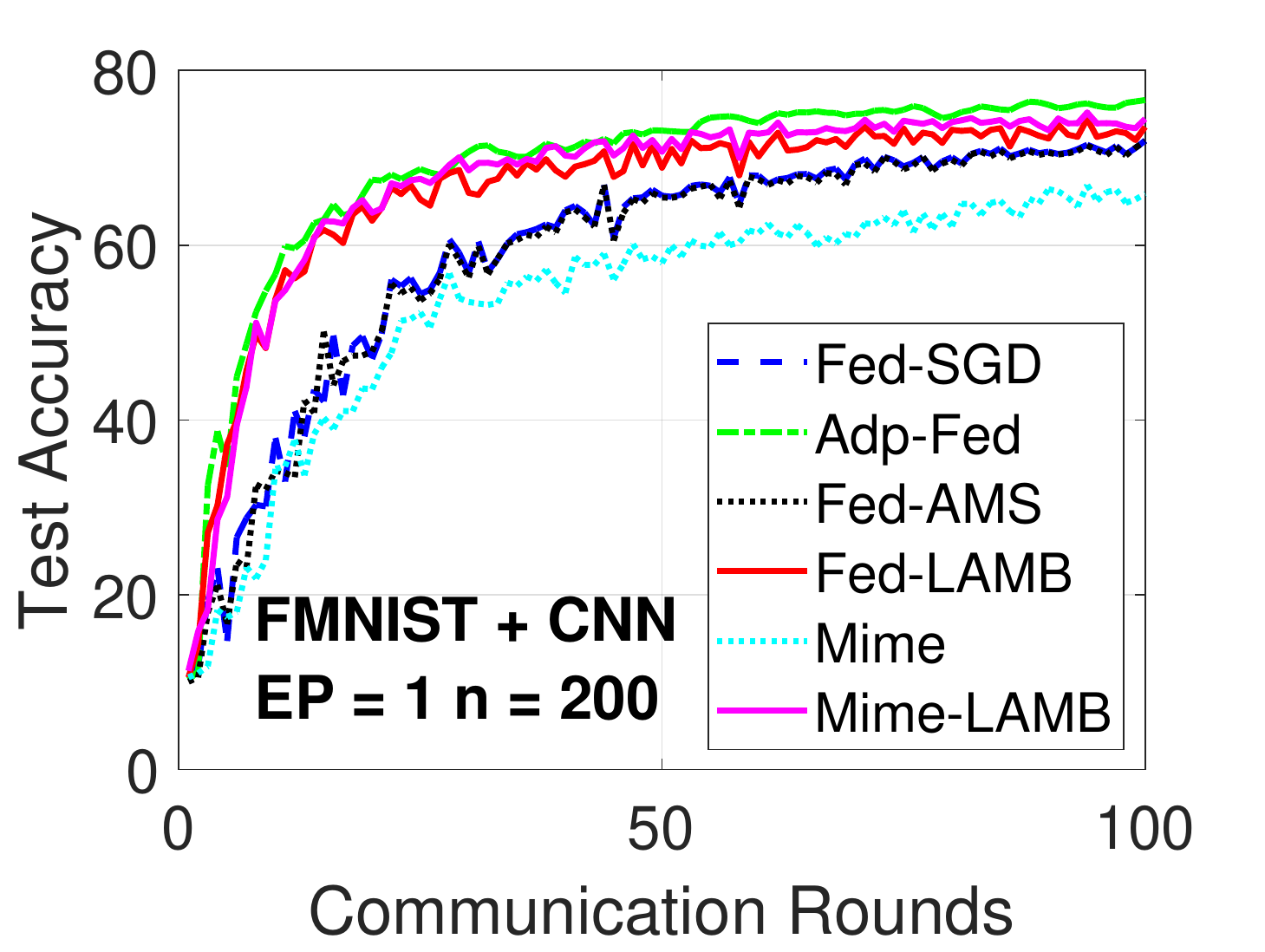}
    }
  \caption{Test accuracy with $n=200$ workers, full participation, local batch size 64. Data are non-IID distributed among clients.}
  \label{fig:client200}
  \end{center}
\end{figure}

\subsection{Summary of empirical findings}

We provide a brief summary of this section. On all datasets, the primary comparison of most importance appears evident:
\begin{align*}
    \textbf{Fed-LAMB$\approx$ Mime-LAMB$>$Fed-AMS$\approx$Mime.}
\end{align*}
The proposed scheme (with two variants) exhibits faster convergence and better generalisation accuracy than recently proposed adaptive FL algorithms. Our results suggest that, similar as in the single-machine training, layer-wise acceleration can also be effective and beneficial in federated learning. Moreover, in practice we may adopt the lazy aggregation trick to further reduce the additional communication required for Fed-LAMB.

\section{Conclusion}\label{sec:conclusion}

We study a doubly adaptive method in the particular framework of federated learning (FL). Built upon the acceleration effect of layer-wise learning rate scheduling and of state-of-the-art adaptive gradient methods, we derive a locally layer-wise FL framework that performs local updates using adaptive AMSGrad on each worker and periodically averages local models stored on each device.
The core of our Fed-LAMB scheme, is to speedup up local training by adopting layer-wise adaptive
learning rates. To out knowledge, this is the first FL algorithm in literature that possess both the  \emph{dimension-wise} adaptivity (by AMSGrad) and \emph{layer-wise} adaptivity (by layer-wise learning rate). We provide the convergence analysis of Fed-LAMB that matches many existing methods, with a linear speedup against the number of clients.
Extensive experiments on various datasets and models, under both IID and non-IID data settings, validate that both Fed-LAMB and Mime-LAMB are able to provide faster convergence  which in turn could lead to reduced communication cost.
In many cases, our framework also improves the overall performance~of~federated~learning~over~prior~methods.

\bibliographystyle{plainnat}
\bibliography{ref}

\clearpage

\appendix

\section{Hyper-parameter Tuning and Algorithms} \label{app:experiment}

\subsection{The Adp-Fed Algorithm~\citep{reddi2020adaptive}}

The Adp-Fed (Adaptive Federated Optimization) is one of the baseline methods compared with Fed-LAMB in our paper. The algorithm is given in Algorithm~\ref{alg:adp-fed}. The key difference between Adp-Fed and Fed-AMS~\citep{chen2020toward} is that, in Adp-Fed, each client runs local SGD (Line~8), and an Adam optimizer is maintained for the global adaptive optimization (Line~15). In the Fed-AMS framework (as well as our Fed-LAMB), each clients runs local (adaptive) AMSGrad method, and the global model is simply obtained by averaging the local models.

\begin{algorithm}[H]
\caption{Adp-Fed: Adaptive Federated Optimization~\citep{reddi2020adaptive}} \label{alg:adp-fed}
\begin{algorithmic}[1]

\State \textbf{Input}: parameter $0< \beta_1, \beta_2 <1$, and learning rate $\alpha_t$, weight decaying parameter $\lambda \in [0,1]$.
\State \textbf{Initialize}: $\theta_{0,i} \in \Theta \subseteq \mathbb R^d $, $m_0=0$, $v_{0} =\epsilon$, $\forall i\in \llbracket n\rrbracket$, and $\theta_0 =  \frac{1}{n} \sum_{i=1}^n \theta_{0,i}$.
\vspace{0.05in}
\State \textbf{for $r=1, \ldots, R$ do}
\State $\quad$\textbf{parallel for device $i$ do}:
\State $\qquad$Set $\theta_{r,i}^{0} = \theta_{r-1}$.

\State $\qquad$\textbf{for $t=1, \ldots, T$ do}
\State $\qquad\quad$Compute stochastic gradient $g^t_{r,i}$ at $\theta_{r,i}^{0}$.
\State $\qquad\quad$$\theta_{r,i}^t=\theta_{r,i}^{t-1}-\eta_l g_{r,i}^t$ \label{adpfed line:local SGD}
\State $\qquad$\textbf{end for}

\State $\qquad$Devices send $\triangle_{r,i}=\theta_{r,i}^T-\theta_{r,i}^0$ to server.

\State $\quad$\textbf{end for}

\State \quad Server computes $\bar{\triangle}_r = \frac{1}{n}\sum_{i=1}^n \triangle_{r,i}$

\State \quad $m_r = \beta_1 m_{r-1} + (1-\beta_1)\bar{\triangle}_r$

\State \quad $v_r = \beta_2 v_{r-1} + (1-\beta_2)\bar{\triangle}_r^2$

\State \quad $\theta_r = \theta_{r-1}+\eta_g\frac{m_r}{\sqrt{v_r}}$ \label{adpfed line:global adam}

\State \textbf{end for}
\State \textbf{Output}: Global model parameter $\theta_R$.
\end{algorithmic}
\end{algorithm}

\subsection{Hyper-parameter Tuning}

In our empirical study, we tune the learning rate of each algorithm carefully such that the best performance is achieved. The search grids in all our experiments are provided in Table~{\ref{tab:tuning}}.

\begin{table}[h]
\centering
\caption{Search grids of the learning rate.}\label{tab:tuning}

\begin{tabular}{c|c}
\toprule[1pt]
 & Learning rate range     \\ \hline
Fed-SGD                  & $[0.001,0.003,0.005,0.01,0.03,0.05,0.1,0.3,0.5]$                      \\\hline
Fed-AMS                  & $[0.0001,0.0003,0.0005,0.001,0.003,0.005,0.01,0.03,0.05,0.1]$ \\\hline
Fed-LAMB     & $[0.001,0.003,0.005,0.01,0.03,0.05,0.1,0.3,0.5]$                      \\\hline
\multirow{2}{*}{Adp-Fed} & Local $\eta_l$: $[0.0001,0.0003,0.0005,0.001,0.003,0.005,0.01,0.03,0.05,0.1,0.3,0.5]$      \\
    & Global $\eta_g$: $[0.0001,0.0003,0.0005,0.001,0.003,0.005,0.01,0.03,0.05,0.1]$ \\\hline
Mime                  & $[0.0001,0.0003,0.0005,0.001,0.003,0.005,0.01,0.03,0.05,0.1]$ \\\hline
Mime-LAMB     & $[0.001,0.003,0.005,0.01,0.03,0.05,0.1,0.3,0.5]$                      \\
\toprule[1pt]
\end{tabular}

\end{table}

\vspace{1in}

\section{Theoretical Analysis}\label{app:proofs}

\subsection{Intermediary Lemma}

We now develop the proof of the convergence rate of Fed-LAMB. We need a supporting Lemma~\ref{lemma:iterates} for this.

\vspace{0.05in}
\begin{lem}\label{lemma:iterates}
Consider $\{\overline{\theta_r}\}_{r>0}$, the sequence of parameters obtained running Algorithm~\ref{alg:ldams}. Then for $i \in \inter$:
\beq\notag
\| \overline{\theta_r} - \theta_{r,i} \|^2 \leq \alpha^2 M^2 \phi_M^2 \frac{(1-\beta_2)p}{\epsilon} \eqsp,
\eeq
where $\phi_M$ is defined in Assumption~\ref{ass:phi} and p is the total number of dimensions $p = \sum_{\ell = 1}^\tot p_\ell$.
\end{lem}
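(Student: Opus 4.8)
The plan is to treat this as a client-drift (consensus-error) bound. Since every client sampled in round $r$ is initialized at the common iterate $\bar\theta_{r-1}$ (Line~6 of Algorithm~\ref{alg:ldams}), the deviation $\overline{\theta_r}-\theta_{r,i}$ is produced entirely by the local LAMB steps, and I would control it by bounding the total length of those steps. Concretely, writing $\overline{\theta_r}=\frac{1}{|D^r|}\sum_{j\in D^r}\theta_{r,j}^T$ and $\theta_{r,i}=\theta_{r,i}^T$, both equal $\bar\theta_{r-1}$ plus their accumulated local increments, so $\overline{\theta_r}-\theta_{r,i}=\frac{1}{|D^r|}\sum_{j\in D^r}(\theta_{r,j}^T-\theta_{r,i}^T)$ depends only on differences of increments. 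By convexity of $\norm{\cdot}^2$ (Jensen over the uniform client average) it then suffices to bound a single local displacement $\norm{\theta_{r,i}^T-\bar\theta_{r-1}}^2$, which I would expand layer-by-layer and telescope over the $T$ inner iterations into a sum of per-step updates.

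Next I would bound each per-step, per-layer update. With $\lambda=0$ the layer-$\ell$ increment in \eqref{eq:updatelayer} is $-\alpha\,\phi(\norm{\theta^{\ell,t-1}_{r,i}})\,\psi^{\ell,t}_{r,i}/\norm{\psi^{\ell,t}_{r,i}}$, whose size is governed by $\phi\le\phi_M$ (Assumption~\ref{ass:phi}) together with the magnitude of the preconditioned direction $\psi=m/\sqrt{\hat v}$. Two ingredients enter here: the first moment obeys $\norm{m^t_{r,i}}\le M$, since by induction it is a convex combination of stochastic gradients each bounded by $M$ (Assumption~\ref{ass:boundgrad}); and the denominator satisfies $\hat v_{r-1}\ge\epsilon$ because $\hat v_0=\epsilon$ and $\hat v$ is nondecreasing through the $\max$ operation. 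I would track the preconditioned direction coordinate-wise, using $\sum_{j}(\psi^{j})^2\le \norm{m}^2/\epsilon$ and summing over all $p=\sum_{\ell=1}^{\tot}p_\ell$ dimensions to produce the factor $p$, while the factor $(1-\beta_2)$ is carried through from the second-moment recursion $v^{t}=\beta_2 v^{t-1}+(1-\beta_2)(g^t)^2$ when the update magnitude is compared against the accumulated $\hat v$. It is worth emphasizing that this argument uses only $\hat v\ge\epsilon$ and the bounds on $m$ and $\phi$, so the same estimate holds verbatim for both Fed-LAMB and Mime-LAMB, which differ solely in how the global $\hat v$ is synchronized.

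The step I expect to be the main obstacle is the precise bookkeeping of constants, namely reconciling the layer-wise normalization (which renders each update direction essentially unit-norm) with the coordinate-wise bounds on $m$, $g$ and the floor $\epsilon$ so as to land exactly on $\alpha^2 M^2\phi_M^2(1-\beta_2)p/\epsilon$. The delicate point is to ensure that the $(1-\beta_2)$ and $1/\epsilon$ factors, which arise from the interplay between the second-moment accumulator and its lower bound, are collected correctly after summing the per-coordinate contributions, and that the remaining dependence on the number of inner iterations is deferred to the proof of Theorem~\ref{th:multiple update} (where it surfaces as the $(T-1)^2$ term). Thus the lemma isolates the $\epsilon$-, $\beta_2$- and dimension-dependence of a single synchronization gap, leaving the accumulation over inner steps to the main convergence argument.
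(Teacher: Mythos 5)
Your proposal follows essentially the same route as the paper: the paper also reduces the gap $\overline{\theta_r}-\theta_{r,i}$ to the magnitude of the local LAMB update(s) (explicitly restricting to the $T=1$ case, with the $(T-1)^2$ accumulation deferred to the proof of Theorem~\ref{th:multiple update}), sums over layers, and invokes $\phi\leq\phi_M$, $\|m\|\leq M$ and $\hat v\geq\epsilon$ to produce the stated constant. The one obstacle you flag --- reconciling the unit-norm layer-wise normalization $\psi/\|\psi\|$ with the coordinate-wise $M^2(1-\beta_2)p/\epsilon$ factors --- is genuine, and the paper's own one-line proof asserts rather than resolves it, so you are not missing a step that the paper actually supplies.
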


\begin{proof}
Assuming the simplest case when $T=1$, i.e., one local iteration, then by construction of Algorithm~\ref{alg:ldams}, we have for all $\ell \in \llbracket \tot \rrbracket$, $i \in \inter$ and $r >0$:
\beq\notag
 \theta^{\ell}_{r,i} =  \overline{\theta_r}^{\ell}  - \alpha \phi(\|\theta_{r,i}^{\ell,t-1}\|)\psi_{r,i}^{j} / \|\psi_{r,i}^{\ell}\|=  \overline{\theta_r}^{\ell}  - \alpha \phi(\|\theta_{r,i}^{\ell,t-1}\|)
 \frac{m^{t}_{r,i}}{\sqrt{v^{t}_{r}}} \frac{1}{\|\psi_{r,i}^{\ell}\|}
\eeq
leading to
\beq\notag
\begin{split}
\|\overline{\theta_r}   -  \theta_{r,i}\|^2  = \sum_{\ell=1}^\tot \pscal{\overline{\theta_r}^{\ell}   -  \theta^{\ell}_{r,i}}{\overline{\theta_r}^{\ell}   -  \theta^{\ell}_{r,i}} \leq \alpha^2 M^2 \phi_M^2 \frac{(1-\beta_2)p}{\epsilon} \eqsp,
\end{split}
\eeq
which concludes the proof.
\end{proof}

\subsection{Proof of Theorem~\ref{th:multiple update}} \label{app:proofmain}

\begin{Theorem*}
Suppose \textbf{Assumption~\ref{ass:smooth}-Assumption~\ref{ass:phi}} hold. Consider $\{\overline{\theta_r}\}_{r>0}$, the sequence of parameters obtained running Algorithm~\ref{alg:ldams} with a constant learning rate $\alpha$. Let the number of local epochs be $T \geq 1$ and $\lambda = 0$. Then, for any round $R > 0$, we have
\begin{align}
  \frac{1}{R}\sum_{r=1}^R  \EE\left[ \left\| \frac{\nabla f(\overline{\theta_r})}{\hat v_r^{1/4}}   \right \|^2 \right] &\leq    \sqrt{\frac{M^2 p}{n}}  \frac{ \triangle}{\tot \alpha R}+\frac{4\alpha \alpha^2 L M^2 (T-1)^2 \phi_M^2 (1-\beta_2)p}{\sqrt{\epsilon}} \\\notag
&+4\alpha \frac{M^2}{\sqrt{\epsilon}} +      \frac{\phi_M   \sigma^2}{R n} \sqrt{\frac{1 - \beta_2}{M^2 p}  } +4\alpha \left[\phi_M \frac{\tot \sigma^2}{\sqrt{n}}\right]     + 4\alpha \left[ \phi_M^2\sqrt{M^2+p\sigma^2} \right],\notag
\end{align}
where $\triangle=\EE[f(\bar{\theta}_1)]  - \min \limits_{\theta \in \Theta} f(\theta)$.
\end{Theorem*}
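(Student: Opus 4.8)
The plan is to run a standard nonconvex descent argument, but carried out \emph{layer by layer} so as to exploit the per-layer smoothness of Assumption~\ref{ass:smooth} and to track the LAMB normalization inside each block $\theta^\ell$. First I would write the block-wise descent inequality
\[
f(\overline{\theta_r}) \leq f(\overline{\theta_{r-1}}) + \sum_{\ell=1}^{\tot} \pscal{\nabla^{\ell} f(\overline{\theta_{r-1}})}{\overline{\theta_r}^{\ell} - \overline{\theta_{r-1}}^{\ell}} + \sum_{\ell=1}^{\tot} \frac{L_\ell}{2}\norm{\overline{\theta_r}^{\ell} - \overline{\theta_{r-1}}^{\ell}}^2 ,
\]
and substitute the aggregated step $\overline{\theta_r}^{\ell} - \overline{\theta_{r-1}}^{\ell} = \frac{1}{|D^r|}\sum_{i \in D^r}(\theta_{r,i}^{\ell,T}-\overline{\theta_{r-1}}^{\ell})$, which unrolls into a sum over $t=1,\dots,T$ of normalized LAMB steps $-\alpha \phi(\norm{\theta_{r,i}^{\ell,t-1}})\,\psi_{r,i}^{\ell,t}/\norm{\psi_{r,i}^{\ell,t}}$ (recall $\lambda=0$).

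The crux is the first-order cross term. Writing $\psi_{r,i}^{\ell,t}=m_{r,i}^{\ell,t}/\sqrt{\hat v_{r-1}^{\ell}}$, I would split the momentum $m_{r,i}^{\ell,t}$ into its conditional mean, built from the true local gradients, and a zero-mean stochastic remainder. The mean part of $\pscal{\nabla^{\ell} f}{\psi/\norm{\psi}}$ produces the negative descent term $-\alpha\phi_m \norm{\nabla^{\ell} f(\overline{\theta_{r-1}})/\hat v_{r-1}^{1/4}}^2 / \norm{\psi_{r,i}^{\ell,t}}$, which is exactly the quantity the theorem controls; the denominator $\norm{\psi}$ is then bounded from above via Assumption~\ref{ass:boundgrad} ($\norm{m}\le M$) and $\hat v\ge\epsilon$, producing the $1/\sqrt{\epsilon}$ and $\sqrt{M^2 p}$ prefactors, while the $1/\sqrt{n}$ arises from averaging the aggregated update over the sampled clients. \textbf{This step is the main obstacle}: the numerator $\nabla^{\ell} f$ and the random, momentum-smoothed, normalized direction $\psi/\norm{\psi}$ are statistically coupled, as in every analysis of sign/normalized adaptive methods. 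I expect to resolve it by conditioning on the filtration up to local step $t-1$, using unbiasedness to eliminate the linear stochastic contribution, and absorbing the residual into variance terms through Assumption~\ref{ass:var}.

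To replace the local gradient $\nabla^{\ell} f_i(\theta_{r,i}^{t-1})$ appearing inside $m$ by the global gradient $\nabla^{\ell} f(\overline{\theta_{r-1}})$ at the averaging point, I would invoke Assumption~\ref{ass:smooth} together with a client-drift bound. Lemma~\ref{lemma:iterates} supplies this for $T=1$; for $T\ge 1$ I would iterate the same estimate along the local trajectory to get a drift of order $\alpha^2\phi_M^2 (T-1)^2 (1-\beta_2)p/\epsilon$ and push it through smoothness. This is precisely what generates the $(T-1)^2$ terms carrying the local variance $\sigma^2$ and the heterogeneity $G^2$ (Assumption~\ref{ass:var}), with the prefactor $\overline{L}=\sum_{\ell}L_\ell$. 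The second-order smoothness term is easier: since each LAMB step has Euclidean norm at most $\alpha\phi(\norm{\cdot})\le\alpha\phi_M$, one gets $\norm{\overline{\theta_r}^{\ell}-\overline{\theta_{r-1}}^{\ell}}^2 = \mathcal{O}(\alpha^2 T^2\phi_M^2)$, contributing the remaining $\alpha^2\overline{L}\phi_M^2$ pieces.

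Finally I would take total expectation, telescope the descent over $r=1,\dots,R$ so the accumulated left-hand sides collapse to $\EE[f(\overline{\theta_1})]-\min_{\theta}f(\theta)=\triangle$, divide by $R$, and absorb the per-layer normalization into the factor $\tot$. Moving the descent contributions $\frac{1}{R}\sum_r\EE[\norm{\nabla f(\overline{\theta_r})/\hat v_r^{1/4}}^2]$ to the left and collecting the $\phi_M$, $\sqrt{1-\beta_2}$, $\sqrt{\epsilon}$, and $\sqrt{p/n}$ factors produced above should reproduce the seven terms of \eqref{bound1multiple}.
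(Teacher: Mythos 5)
Your overall scaffolding (block-wise descent inequality, drift bound via Lemma~\ref{lemma:iterates}, telescoping) matches the paper, but the step you yourself flag as the main obstacle is not resolved by the mechanism you propose, and this is a genuine gap. You suggest splitting $m_{r,i}^{\ell,t}$ into its conditional mean plus a zero-mean remainder and then "using unbiasedness to eliminate the linear stochastic contribution." The difficulty is that the LAMB direction $\psi/\norm{\psi}$ is a \emph{nonlinear} function of the stochastic gradient: the random quantity $g$ appears in the denominator $\norm{\psi_{r,i}^{\ell,t}}$ as well as the numerator, so $\EE\bigl[\pscal{\nabla^{\ell} f}{m/(\sqrt{\hat v}\,\norm{\psi})}\bigr]$ does not reduce to $\pscal{\nabla^{\ell} f}{\EE[m]/(\sqrt{\hat v}\,\norm{\psi})}$, and unbiasedness of $g$ buys you nothing. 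Conditioning on the filtration leaves you exactly where you started. The paper avoids this trap with two devices you do not have: (i) it never tries to take the expectation of the inner product directly, but instead applies the polarization identity to rewrite $\pscal{\nabla f(\bar\vartheta_r)}{\bar g_r/\sqrt{\hat v_r}}$ as $\tfrac12\norm{\nabla f/\hat v^{1/4}}^2+\tfrac12\norm{\bar g/\hat v^{1/4}}^2$ minus a difference term, so that the descent term appears deterministically and only the difference $\norm{(\nabla f-\bar g)/\hat v^{1/4}}^2$ needs bounding; and (ii) inside that difference (the term $B_2$), the residual coupling is handled by the sign-agreement decomposition inherited from the LAMB analysis of \citet{you2019large}, where the event $\sign(\nabla_\ell f^j)\neq\sign(g^{l,j})$ is controlled in probability and produces the $\phi_M \tot\sigma^2/\sqrt{n}$ term in the bound. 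Without either device, your cross-term estimate does not go through, and the $\sqrt{n}$ and $\sqrt{M^2+p\sigma^2}$ terms in the statement have no source in your argument.

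A secondary omission: you never introduce the auxiliary sequence $\bar{\vartheta}_r = \bar{\theta}_r + \frac{\beta_1}{1-\beta_1}(\bar{\theta}_r-\bar{\theta}_{r-1})$, which is how the paper disentangles the momentum recursion from the per-round step before the descent lemma is even applied; your plan applies smoothness directly at $\overline{\theta_r}$ and would then have to absorb the $\beta_1$-weighted history of past normalized steps into the cross term, which reintroduces the same coupling problem at every past iteration. These are not cosmetic differences --- they are the two ideas that make the proof work --- so the proposal as written would not close.
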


\begin{proof}
Using Assumption~\ref{ass:smooth}, we have
\begin{align}\notag
f(\bar{\vartheta}_{r+1}) &  \leq f(\bar{\vartheta}_r) + \pscal{\nabla f(\bar{\vartheta}_r)}{\bar{\vartheta}_{r+1} - \bar{\vartheta}_r} + \sum_{\ell =1}^L \frac{L_\ell}{2} \| \bar{\vartheta}^\ell_{r+1} - \bar{\vartheta}^\ell_r \|^2\\\notag
&  \leq f(\bar{\vartheta}_r) + \sum_{\ell=1}^\tot \sum_{j=1}^{p_\ell} \nabla_{\ell} f(\bar{\vartheta}_r)^j (\bar{\vartheta}^{\ell,j}_{r+1} - \bar{\vartheta}^{\ell,j}_r) + \sum_{\ell =1}^L \frac{L_\ell}{2} \| \bar{\vartheta}^\ell_{r+1} - \bar{\vartheta}^\ell_r \|^2  \eqsp.
\end{align}
Taking expectations on both sides leads to
\begin{align}\label{eq:main}
- \EE[  \pscal{\nabla f(\bar{\vartheta}_r)}{\bar{\vartheta}_{r+1} - \bar{\vartheta}_r}]  \leq  \EE[ f(\bar{\vartheta}_r) - f(\bar{\vartheta}_{r+1})] + \sum_{\ell =1}^L \frac{L_\ell}{2} \EE[  \| \bar{\vartheta}^\ell_{r+1} - \bar{\vartheta}^\ell_r \|^2] \eqsp.
\end{align}

Yet, we observe that, using the classical intermediate quantity used for proving convergence results of adaptive optimization methods, see for instance~\citep{reddi2019convergence}, we have
\beq\label{eq:defseq}
\bar{\vartheta}_r = \bar{\theta}_r +  \frac{\beta_1}{1-\beta_1}(\bar{\theta}_{r} - \bar{\theta}_{r-1}) \eqsp,
\eeq
where $\bar{\theta_r}$ denotes the average of the local models at round $r$.
Then for each layer $\ell$,
\begin{align}\label{eq:gap}
\bar{\vartheta}^\ell_{r+1} - \bar{\vartheta}^\ell_r  & = \frac{1}{1-\beta_1}(\bar{\theta}^\ell_{r+1} - \bar{\theta}^\ell_{r}) - \frac{\beta_1}{1-\beta_1}(\bar{\theta}^\ell_{r} - \bar{\theta}^\ell_{r-1}) \nonumber\\
& = \frac{\alpha_{r}}{1-\beta_1} \frac{1}{n} \sum_{i = 1}^n \frac{\phi(\|\theta_{r,i}^{\ell}\|)}{\|\psi_{r,i}^{\ell}\|} \psi_{r,i}^{\ell}  - \frac{\alpha_{r-1}}{1-\beta_1} \frac{1}{n} \sum_{i = 1}^n \frac{\phi(\|\theta_{r-1,i}^{\ell}\|)}{\|\psi_{r-1,i}^{\ell}\|} \psi_{r-1,i}^{\ell}\nonumber\\
& = \frac{\alpha \beta_1}{1-\beta_1} \frac{1}{n}  \sum_{i = 1}^n  \left( \frac{\phi(\|\theta_{r,i}^{\ell}\|)}{\sqrt{v^{t}_{r}} \|\psi_{r,i}^{\ell}\|} - \frac{\phi(\|\theta_{r-1,i}^{\ell}\|)}{\sqrt{v^{t}_{r-1}} \|\psi_{r-1,i}^{\ell}\|} \right) m^{t}_{r-1} + \frac{\alpha}{n} \sum_{i = 1}^n \frac{\phi(\|\theta_{r,i}^{\ell}\|)}{\sqrt{v^{t}_{r}} \|\psi_{r,i}^{\ell}\|} g^t_{r,i} \eqsp,
\end{align}
where we have assumed a constant learning rate $\alpha$.

We note for all $\theta \in \Theta$, the majorant $G > 0$ such that $\phi(\|\theta \|) \leq G$.
Then, following \eqref{eq:main}, we obtain
\begin{align}\label{eq:main2}
- \EE[  \pscal{\nabla f(\bar{\vartheta}_r)}{\bar{\vartheta}_{r+1} - \bar{\vartheta}_r}]  \leq  \EE[ f(\bar{\vartheta}_r) - f(\bar{\vartheta}_{r+1})] + \sum_{\ell =1}^L \frac{L_\ell}{2} \EE[  \| \bar{\vartheta}_{r+1} - \bar{\vartheta}_r \|^2] \eqsp.
\end{align}
Developing the LHS of \eqref{eq:main2} using \eqref{eq:gap} leads to
\begin{align} \notag
\pscal{\nabla f(\bar{\vartheta}_r)}{\bar{\vartheta}_{r+1} - \bar{\vartheta}_r} &= \sum_{\ell=1}^\tot \sum_{j=1}^{p_\ell} \nabla_{\ell} f(\bar{\vartheta}_r)^j (\bar{\vartheta}^{\ell,j}_{r+1} - \bar{\vartheta}^{\ell,j}_r)  \\ \notag
& =  \frac{\alpha \beta_1}{1-\beta_1}\frac{1}{n}  \sum_{\ell=1}^\tot \sum_{j=1}^{p_\ell} \nabla_{\ell} f(\bar{\vartheta}_r)^j \left[   \sum_{i = 1}^n  \left( \frac{\phi(\|\theta_{r,i}^{\ell}\|)}{\sqrt{v^{t}_{r}} \|\psi_{r,i}^{\ell}\|} - \frac{\phi(\|\theta_{r-1,i}^{\ell}\|)}{\sqrt{v^{t}_{r-1}} \|\psi_{r-1,i}^{\ell}\|} \right) m^{t}_{r-1}  \right] \\ \label{eqn1}
& \underbrace{ -\frac{\alpha}{n} \sum_{\ell=1}^\tot \sum_{j=1}^{p_\ell} \nabla_{\ell} f(\bar{\vartheta}_r)^j  \sum_{i = 1}^n \frac{\phi(\|\theta_{r,i}^{\ell}\|)}{\sqrt{v^{t}_{r}} \|\psi_{r,i}^{\ell}\|} g_{r,i}^{t,l,j}}_{= A_1}   \eqsp.
\end{align}
Suppose $T$ is the total number of local iterations and $R$ is the number of rounds. We can write~\eqref{eqn1}~as
\begin{align}\notag
    A_1=-\alpha \langle \nabla f(\bar \vartheta_r),\frac{\bar g_r}{\sqrt{\hat v_r}} \rangle,
\end{align}
where $\bar g_r=\frac{1}{n}\sum_{i=1}^n \bar g_{t,i}$, with $\bar g_{t,i}=\Big[\frac{\phi(\Vert \theta_{t,i}^1\Vert)}{\Vert \psi_{t,i}^1\Vert}g_{t,i}^1,..., \frac{\phi(\Vert \theta_{t,i}^L\Vert)}{\Vert \psi_{t,i}^L\Vert}g_{t,i}^L   \Big]$ representing the normalized gradient (concatenated by layers) of the $i$-th device. It holds that
\begin{align}
    \langle \nabla f(\bar \vartheta_r),\frac{\bar g_r}{\sqrt{\hat v_r}} \rangle&=\frac{1}{2}\Vert \frac{\nabla f(\bar\vartheta_r) }{\hat v_r^{1/4}}\Vert^2+\frac{1}{2}\Vert \frac{\bar g_r }{\hat v_r^{1/4}}\Vert^2-\Vert \frac{\nabla f(\bar\vartheta_r)-\bar g_r }{\hat v_r^{1/4}}\Vert^2.  \label{eqn:x1}
\end{align}

To bound the last term on the RHS, we have
\begin{align}\notag
    \Vert \frac{\nabla f(\bar\vartheta_r)-\bar g_r }{\hat v_r^{1/4}}\Vert^2=\Vert \frac{\frac{1}{n}\sum_{i=1}^n (\nabla f(\bar\vartheta_r)-\bar g_{t,i})}{\hat v_r^{1/4}} \Vert^2
    &\leq \frac{1}{n}\sum_{i=1}^n\Vert \frac{\nabla f(\bar\vartheta_r)-\bar g_{t,i}}{\hat v_r^{1/4}} \Vert^2\\\notag
    &\leq \frac{2}{n}\sum_{i=1}^n \Big(\Vert \frac{\nabla f(\bar\vartheta_r)-\nabla f(\bar\theta_r)}{\hat v_r^{1/4}} \Vert^2+\Vert \frac{\nabla f(\bar\theta_r)-\bar g_{t,i}}{\hat v_r^{1/4}} \Vert^2  \Big).
\end{align}
By Lipschitz smoothness of the loss function, the first term admits
\begin{align}\notag
    \frac{2}{n}\sum_{i=1}^n\Vert \frac{\nabla f_i(\bar\vartheta_r)-\nabla f_i(\bar\theta_r)}{\hat v_r^{1/4}} \Vert^2 \leq \frac{2}{n \sqrt{\epsilon}}\sum_{i=1}^n L_\ell\Vert \bar\vartheta_r-\bar\theta_r\Vert^2 & =\frac{2L_\ell}{n \sqrt{\epsilon}}\frac{\beta_1^2}{(1-\beta_1)^2}\sum_{i=1}^n \Vert \bar\theta_r-\bar\theta_{t-1}\Vert ^2\\\notag
    &\leq \frac{2\alpha^2 L_\ell }{n \sqrt{\epsilon}}\frac{\beta_1^2}{(1-\beta_1)^2} \sum_{l=1}^L \sum_{i=1}^n\Vert \frac{\phi(\Vert \theta_{t,i}^l\Vert)}{\Vert \psi_{t,i}^l\Vert}\psi_{t,i}^l \Vert^2\\\notag
    &\leq \frac{2\alpha^2 L_\ell p\phi_M^2}{ \sqrt{\epsilon}}\frac{\beta_1^2}{(1-\beta_1)^2}.
\end{align}
For the second term,
\begin{align}\label{eq:inter}
    \frac{2}{n}\sum_{i=1}^n\Vert \frac{\nabla f(\bar\theta_r)-\bar g_{t,i}}{\hat v_r^{1/4}} \Vert^2 \leq \frac{4}{n}\Big( \underbrace{\sum_{i=1}^n \Vert \frac{\nabla f(\bar\theta_r)-\nabla f(\theta_{t,i})}{\hat v_r^{1/4}} \Vert^2}_{B_1} + \underbrace{ \sum_{i=1}^n\Vert \frac{\nabla f(\theta_{t,i})-\bar g_{t,i}}{\hat v_r^{1/4}} \Vert^2}_{B_2} \Big).
\end{align}
Using the smoothness of $f_i$ we can transform $B_1$ into consensus error by
\begin{align}\notag
    B_1\leq \frac{L}{\sqrt{\epsilon}}\sum_{i=1}^n \Vert \bar\theta_r - \theta_{t,i}\Vert^2  & =\frac{\alpha^2 L}{\sqrt{\epsilon}}\sum_{i=1}^n\sum_{l=1}^L \| \sum_{j=\lfloor t \rfloor_r+1}^t \Big( \frac{\phi(\Vert \theta_{j,i}^l\Vert)}{\Vert \psi_{j,i}^l\Vert}\psi_{j,i}^l-\frac{1}{n}\sum_{k=1}^n \frac{\phi(\Vert \theta_{j,k}^l\Vert)}{\Vert \psi_{j,k}^l\Vert}\psi_{j,k}^l \Big) \|^2\\\label{eqn:B1}
    &\leq n \frac{\alpha^2 L}{\sqrt{\epsilon}} M^2 (T-1)^2 \phi_M^2 (1-\beta_2)p,
\end{align}
where the last inequality stems from Lemma~\ref{lemma:iterates} in the particular case where $  \theta_{t,i}$ are averaged every $ct+1$ local iterations for any integer $c$, since $(t-1)-(\lfloor t \rfloor_r+1)+1 \leq T-1$.

We now develop the expectation of $B_2$ under the simplification that $\beta_1 = 0$:
\begin{align}\notag
    \mathbb E[B_2]&=\mathbb E[\sum_{i=1}^n\Vert \frac{\nabla f(\theta_{t,i})-\bar g_{t,i}}{\hat v_r^{1/4}} \Vert^2] \\\notag
    &\leq \frac{nM^2}{\sqrt{\epsilon}}+n\phi_M^2\sqrt{M^2+p\sigma^2}-2\sum_{i=1}^n\mathbb E[\langle \nabla f(\theta_{t,i}),\bar g_{t,i} \rangle/\sqrt{\hat v_r}]\\\notag
    &=\frac{nM^2}{\sqrt{\epsilon}}+n\phi_M^2\sqrt{M^2+p\sigma^2}-2\sum_{i=1}^n \sum_{\ell=1}^L \mathbb E[\langle \nabla_\ell f(\theta_{t,i}),\frac{\phi(\|\theta_{t,i}^l \|)}{\| \psi_{t,i}^l \|}g_{t,i}^l \rangle/\sqrt{\hat v_r^l}]\\\notag
    &=\frac{nM^2}{\sqrt{\epsilon}}+n\phi_M^2\sqrt{M^2+p\sigma^2}-2\sum_{i=1}^n \sum_{l=1}^L\sum_{i=1}^{p_l} \mathbb E[\nabla_l f(\theta_{t,i})^j\frac{\phi(\|\theta_{t,i}^{l,j} \|)}{\sqrt{\hat v_r^{l,j}}\| \psi_{t,i}^{l,j} \|}g_{t,i}^{l,j} ]\\\notag
    & \leq \frac{nM^2}{\sqrt{\epsilon}}+n\phi_M^2\sqrt{M^2+p\sigma^2}-2\sum_{i=1}^n \sum_{l=1}^L\sum_{i=1}^{p_l} \mathbb E \left[ \sqrt{\frac{1-\beta_2}{M^2 p_\ell}}  \phi(\|\theta_{r,i}^{l,j}\|)  \nabla_l f(\theta_{t,i})^j  g_{t,i}^{l,j}\right]\\\notag
    &\hspace{0.4in} -2 \sum_{i = 1}^n \sum_{l=1}^L\sum_{j=1}^{p_l}  E \left[  \left( \phi(\|\theta_{r,i}^{l,j}\|)   \nabla_l f(\theta_{t,i})^j   \frac{g_{r,i}^{t,l,j}}{ \|\psi_{r,i}^{l,j}\|}\right)\mathsf{1}\left( \sign(  \nabla_l f(\theta_{t,i})^j \neq  \sign( g_{r,i}^{t,l,j}) \right)\right],
\end{align}
where we use assumption Assumption~\ref{ass:boundgrad}, Assumption~\ref{ass:var} and Assumption~\ref{ass:phi}.
Yet,
\begin{align*}
&- \mathbb E \Bigg[  \left( \phi(\|\theta_{r,i}^{l,j}\|)   \nabla_l f(\theta_{t,i})^j   \frac{g_{r,i}^{t,l,j}}{ \|\psi_{r,i}^{l,j}\|}\right)\mathsf{1}\left( \sign(  \nabla_l f(\theta_{t,i})^j
\neq  \sign( g_{r,i}^{t,l,j}) \right)\Bigg] \\
&\hspace{2in} \leq  \phi_M \nabla_l f(\theta_{t,i})^j   \mathbb{P}\left[  \sign(  \nabla_l f(\theta_{t,i})^j \neq  \sign( g_{r,i}^{t,l,j}) \right].
\end{align*}
Then we have
\begin{align}\notag
    \mathbb E[B_2]\leq  \frac{nM^2}{\sqrt{\epsilon}}+n\phi_M^2\sqrt{M^2+p\sigma^2}-2 \phi_m \sqrt{\frac{1-\beta_2}{M^2 p}} \sum_{i=1}^n \E[\| [\nabla f(\theta_{t,i}) \|^2] + \phi_M \frac{\tot \sigma^2}{\sqrt{n}}
\end{align}
Thus, \eqref{eq:inter} becomes
\begin{align}\notag
    \frac{2}{n}\sum_{i=1}^n\Vert \frac{\nabla f_i(\bar\theta_r)-\bar g_{t,i}}{\hat v_r^{1/4}} \Vert^2 \leq 4 \left[ \frac{\alpha^2 L_\ell}{\sqrt{\epsilon}} \alpha^2 M^2 (T-1)^2 \phi_M^2 (1-\beta_2)p + \frac{\alpha M^2}{\sqrt{\epsilon}}+\phi_M^2\sqrt{M^2+p\sigma^2} + \alpha\phi_M \frac{\tot \sigma^2}{\sqrt{n}}\right]
\end{align}

Substituting all ingredients into (\ref{eqn:x1}), we obtain
\begin{align}\notag
    -\alpha \mathbb E[\langle \nabla f(\bar \vartheta_r),\frac{\bar g_r}{\sqrt{\hat v_r}} \rangle] &\leq -\frac{\alpha}{2}\mathbb E\big[\Vert \frac{\nabla f(\bar\vartheta_r) }{\hat v_r^{1/4}}\Vert^2 \big]-\frac{\alpha}{2}\mathbb E\big[\Vert \frac{\bar g_r }{\hat v_r^{1/4}}\Vert^2 \big]+\frac{2\alpha^3 L_\ell p\phi_M^2}{ \sqrt{\epsilon}}\frac{\beta_1^2}{(1-\beta_1)^2} \\\notag
    &\hspace{0.1in}  + 4 \alpha \left[ \frac{\alpha^2 L}{\sqrt{\epsilon}} M^2 (T-1)^2 \phi_M^2 (1-\beta_2)p + \frac{\alpha M^2}{\sqrt{\epsilon}}+\phi_M^2\sqrt{M^2+p\sigma^2} +\alpha \phi_M \frac{\tot \sigma^2}{\sqrt{n}}\right].
\end{align}

To bound the second term on the RHS in above, we notice that
\begin{align}\notag
    \mathbb E\big[\Vert \frac{\bar g_r }{\hat v_r^{1/4}}\Vert^2\big]=\frac{1}{n^2}\mathbb E\big[\Vert \frac{\sum_{i=1}^n \bar g_{r,i}}{\hat v_r^{1/4}}\Vert^2 \big] &=\frac{1}{n^2}\mathbb E\big[ \sum_{l=1}^L \sum_{i=1}^n \Vert  \frac{\phi(\Vert \theta_{r,i}^l\Vert)}{\hat v^{1/4} \Vert \psi_{r,i}^l\Vert}g_{r,i}^l \Vert^2 \big] \\
    &\geq \phi_m^2(1-\beta_2) \mathbb E\left[ \Vert \frac{1}{n}\sum_{i=1}^n \frac{\nabla f(\theta_{r,i})}{\hat v^{1/4}_r} \Vert^2 \right]\\\notag
    &=\phi_m^2(1-\beta_2) \mathbb E\left[ \Vert  \frac{\overline\nabla f(\theta_{r})}{\hat v^{1/4}_r} \Vert^2 \right]. \notag
\end{align}

Regarding $\left\| \frac{\overline{\nabla}f(\theta_r)}{\hat v_r^{1/4}} \right\|^2$, we have
\begin{align*}
\left\| \frac{\overline{\nabla}f(\theta_r)}{\hat v_r^{1/4}} \right\|^2 & \geq \frac{1}{2} \left\| \frac{\nabla f(\overline{\theta_r})}{\hat v_r^{1/4}} \right\|^2 - \left\| \frac{\overline{\nabla}f(\theta_r)- \nabla f(\overline{\theta_r})}{\hat v_r^{1/4}} \right\|^2\\
& \geq \frac{1}{2} \left\| \frac{\nabla f(\overline{\theta_r})}{\hat v_r^{1/4}} \right\|^2 - \left\| \frac{\frac{1}{n}\sum_{i=1}^n (\nabla f_i(\theta_{r})-\nabla f(\bar\theta_r))}{\hat v_r^{1/4}} \right\|^2 \\
&\geq \frac{1}{2} \left\| \frac{\nabla f(\overline{\theta_r})}{\hat v_r^{1/4}} \right\|^2 - \frac{\alpha^2 L_\ell}{\sqrt{\epsilon}} M^2 (T-1)^2 (\sigma^2 + G^2) (1-\beta_2)p,
\end{align*}
where the last line is due to (\ref{eqn:B1}) and Assumption~\ref{ass:var}. Therefore, we have obtained
\begin{align*}
    A_1&\leq -\frac{\alpha\phi_m^2(1-\beta_2)}{4}\left\| \frac{\nabla f(\overline{\theta_r})}{\hat v_r^{1/4}} \right\|^2+\frac{\alpha^3 L_\ell}{\sqrt{\epsilon}} M^2 (T-1)^2 \phi_m^2\phi_M^2 (1-\beta_2)^2p+\frac{2\alpha^3 L_\ell p\phi_M^2}{ \sqrt{\epsilon}}\frac{\beta_1^2}{(1-\beta_1)^2} \\
    &\hspace{0.4in}  + 4\alpha \left[ \frac{\alpha^2 L}{\sqrt{\epsilon}}  M^2 (T-1)^2 (\sigma^2 + G^2) (1-\beta_2)p + \frac{M^2 \alpha }{\sqrt{\epsilon}}+ \alpha \phi_M^2\sqrt{M^2+p\sigma^2} + \phi_M \alpha \frac{\tot \sigma^2}{\sqrt{n}}\right],\\
    &\leq -\frac{\alpha\phi_m^2(1-\beta_2)}{4}\left\| \frac{\nabla f(\overline{\theta_r})}{\hat v_r^{1/4}} \right\|^2+\frac{\alpha^3 L_\ell}{\sqrt{\epsilon}} M^2 (T-1)^2 \phi_m^2\phi_M^2 (1-\beta_2)^2p+\frac{2\alpha^3 L_\ell p\phi_M^2}{ \sqrt{\epsilon}}\frac{\beta_1^2}{(1-\beta_1)^2} \\
    &\hspace{0.4in}  + 4 \alpha\Big[ \frac{\alpha^2 L}{\sqrt{\epsilon}}  M^2 (T-1)^2 G^2 (1-\beta_2)p + \frac{M^2 \alpha }{\sqrt{\epsilon}}+ \alpha \phi_M^2\sqrt{M^2+p\sigma^2} \\
    &\hspace{1.7in} + \sigma^2 \left(\frac{\alpha^2 L}{\sqrt{\epsilon}}  M^2 (T-1)^2(1-\beta_2)p+ \phi_M \alpha \frac{\tot }{\sqrt{n}} \right)\Big].
\end{align*}
Substitute back into (\ref{eqn1}), assuming $M\leq 1$, we have the following by taking the telescope sum
\begin{align*}
    &\frac{1}{R}\sum_{t=1}^R  \EE\left[ \left\| \frac{\nabla f(\overline{\theta_r})}{\hat v_r^{1/4}}   \right \|^2 \right] \\
    & \lesssim  \sqrt{\frac{M^2 p}{n}} \frac{ f(\bar{\vartheta}_1)  - \EE[ f(\bar{\vartheta}_{R+1})]}{\tot \alpha R}+   \frac{\alpha}{n^2}  \sum_{r=1}^R  \sum_{i = 1}^n  \sigma_i^2 \EE\left[ \left\|\frac{\phi(\|\theta_{r,i}^{\ell}\|)}{\sqrt{v_r} \|\psi_{r,i}^{\ell}\|} \right\|^2 \right] +\frac{2\alpha^3 \overline{L} p\phi_M^2}{ \sqrt{\epsilon}}\frac{\beta_1^2}{(1-\beta_1)^2} \nonumber\\
   &   +4 \Big[ \frac{\alpha^2 \overline{L}}{\sqrt{\epsilon}}  M^2 (T-1)^2 G^2 (1-\beta_2)p + \frac{\alpha M^2}{\sqrt{\epsilon}}+ \alpha \phi_M^2 \sqrt{M^2+p\sigma^2} \nonumber \\
   & + \sigma^2 \left(\frac{\alpha^2 \overline{L}}{\sqrt{\epsilon}}  M^2 (T-1)^2(1-\beta_2)p+ \phi_M\alpha \frac{\tot }{\sqrt{n}} \right)\Big]  +\frac{\alpha \beta_1}{1-\beta_1}  \sqrt{(1-\beta_2)p} \frac{\tot M^2}{\sqrt{\epsilon}} +\overline{L} \alpha^2 M^2 \phi_M^2 \frac{(1-\beta_2)p}{T\epsilon} \nonumber\\
   & \leq   \sqrt{\frac{M^2 p}{n}}  \frac{ \EE[f(\bar{\theta}_1)]  - \min \limits_{\theta \in \Theta} f(\theta)}{\tot \alpha R} +      \frac{\phi_M   \sigma^2}{R n} \sqrt{\frac{1 - \beta_2}{M^2 p}  } \\
   &+4 \Big[ \frac{\alpha^2 \overline{L}}{\sqrt{\epsilon}}  M^2 (T-1)^2 G^2 (1-\beta_2)p + \frac{M^2 \alpha }{\sqrt{\epsilon}}+\phi_M^2 \alpha \sqrt{M^2+p\sigma^2} \\
   & + \sigma^2 \Big(\frac{\alpha^2 \overline{L}}{\sqrt{\epsilon}}  M^2 (T-1)^2(1-\beta_2)p+ \phi_M \frac{\tot }{\sqrt{n}} \Big)\Big]+\frac{\alpha \beta_1}{1-\beta_1}  \sqrt{(1-\beta_2)p} \frac{\tot M^2}{\sqrt{\epsilon}} \\
   &\hspace{1.4in} +\overline{L} \alpha^2 M^2 \phi_M^2 \frac{(1-\beta_2)p}{T\epsilon} +\frac{2\alpha^3 \overline{L} p\phi_M^2}{ \sqrt{\epsilon}}\frac{\beta_1^2}{(1-\beta_1)^2}.
\end{align*}

Organizing terms, we conclude the proof.
\end{proof}

\end{document}